\title{A Dynamical System View of\\Langevin-Based Non-Convex Sampling}
\newcommand{\acdef}[1]{\emph{\acl{#1}} \textup{(\acs{#1})}\acused{#1}}		%
\colorlet{MyRed}{Crimson}
\colorlet{MyGreen}{DarkGreen!80!black}
\colorlet{MyBlue}{MediumBlue}
\newcommand{\afterhead}{.}		%
\newcommand{\para}[1]{\paragraph{\textbf{\S\;#1\afterhead}}}
\newcommand{\cmark}{\color{MyGreen}\ding{51}}		%
\newcommand{\xmark}{\color{MyRed}\ding{55}}		%
\crefname{assumption}{Assumption}{Assumptions}
\theoremstyle{plain}
\newtheorem{theorem}{Theorem}		%
\newtheorem*{theorem*}{Theorem}		%
\newtheorem{lemma}{Lemma}		%
\newtheorem{proposition}{Proposition}		%
\newtheorem*{corollary*}{Corollary}		%
\newtheorem{assumption}{Assumption}		%
\theoremstyle{definition}
\newtheorem{definition}{Definition}		%
\newtheorem{example}{Example}		%
\newtheorem*{definition*}{Definition}		%
\newtheorem*{assumption*}{Assumption}		%
\newtheorem*{blanket*}{Blanket assumption}		%
\newtheorem*{example*}{Example}		%
\theoremstyle{remark}
\newtheorem*{remark*}{Remark}		%
\newcommand{\dd}{\:\mathrm{d}}
\newcommand{\dBt}{\dd B_t}
\newcommand{\ddt}{\dd t}
\newcommand{\law}[1]{ \mathrm{law}\prn*{ #1 } }
\newcommand{\Wtwo}{W_2}
\newcommand{\Wsp}{\mathbb{W}_2}
\newcommand{\KLbase}{D_{\mathrm{KL}}}
\newcommand{\KL}[2]{ \KLbase\prn*{ #1 \Vert #2 } }
\newcommand{\Xbar}{\overline{X}}
\newcommand{\gammabar}{\overline{\gamma}}
\newcommand{\Fil}{\mathscr{F}}
\newcommand{\E}{\mathbb{E}}
\newcommand{\noise}{U}
\newcommand{\bias}{b}
\newcommand{\grad}{\nabla}
\newcommand{\stochgrad}{\widetilde{\nabla}}
\newcommand{\qqed}{\ensuremath{\square}}
\newcommand{\rgrad}{\mathop{\mathrm{grad}}}
\newcommand{\Lgv}{L}
\newcommand{\meandynamics}{Langevin flow\@\xspace}
\newcommand{\pert}{Z}
\newcommand{\pertbar}{\overline{Z}}
\newcommand{\step}{\gamma}
\newcommand{\dpt}{x} %
\newcommand{\cpt}{X} %
\newcommand{\stdgauss}{\xi}
\newcommand{\iid}{i.i.d.\@\xspace}
\newcommand{\Xflsymb}{\Phi}
\newcommand{\Xflt}[1]{\Xflsymb^{(#1)}}
\newcommand{\Xfl}[2]{\Xflsymb_{#1}^{(#2)}}
\newcommand{\Xpisymb}{Y}
\newcommand{\Xpi}[2]{\Xpisymb_{#1}^{(#2)}}
\newcommand\inner{%
  \@ifstar{\innerstar}{\innernostar}%
}
\newcommand{\innernostar}[1]{\ang{#1}}
\newcommand{\innerstar}[1]{\ang*{#1}}
\newcommand{\xmid}[1]{x_{#1 + 1/2}}%
\providecommand\given{}		%
\DeclarePairedDelimiterXPP{\exof}[1]{\E}{[}{]}{}{%
\renewcommand\given{\nonscript\,\delimsize\vert\nonscript\,\mathopen{}} #1}
\DeclarePairedDelimiterXPP{\probof}[1]{\prob}{(}{)}{}{%
\renewcommand\given{\nonscript\:\delimsize\vert\nonscript\:\mathopen{}} #1}
\DeclareMathOperator{\bigoh}{\mathcal{O}}		%
\newcommand{\eps}{\epsilon}
\newcommand{\tr}{\mathrm{tr}}
\newcommand{\N}{\mathbb{N}}		%
\newcommand{\R}{\mathbb{R}}		%
\newcommand{\eg}{e.g.,\xspace}		%
\newcommand{\ie}{i.e.,\xspace}		%
\newcommand{\vs}{vs.\xspace}		%
\DeclarePairedDelimiter{\nrm}{\|}{\|}
\DeclarePairedDelimiter{\abs}{|}{|}
\DeclarePairedDelimiter{\prn}{(}{)}
\DeclarePairedDelimiter{\crl}{\{}{\}}
\DeclarePairedDelimiter{\brk}{[}{]}
\DeclarePairedDelimiter{\ang}{\langle}{\rangle}
\newacro{LRM}{Langevin--Robbins--Monro}
\newacro{APT}{asymptotic pseudotrajectory}
\newacro{WAPT}[WAPT]{Wasserstein asymptotic pseudotrajectory}
\author{%
  Mohammad Reza Karimi\thanks{Equal contribution.} \\
  ETH Z\"{u}rich\\
  \texttt{mkarimi@inf.ethz.ch} \\
\And
  Ya-Ping Hsieh\footnotemark[1] \\
  ETH Z\"{u}rich\\
  \texttt{yaping.hsieh@inf.ethz.ch} \\
\And
  Andreas Krause \\
  ETH Z\"{u}rich\\
  \texttt{krausea@ethz.ch} \\
}
\begin{document}

\maketitle

\begin{abstract}%
  
Non-convex sampling is a key challenge in machine learning, central to non-convex optimization in
deep learning as well as to approximate probabilistic inference. Despite its significance,
theoretically there remain some important challenges: Existing guarantees suffer from the drawback
of lacking guarantees for the \emph{last-iterates}, and little is known beyond the elementary
schemes of stochastic gradient Langevin dynamics. To address these issues, we develop a novel
framework that lifts the above issues by harnessing several tools from the theory of dynamical
systems. Our key result is that, for a large class of state-of-the-art sampling schemes, their
last-iterate convergence in Wasserstein distances can be reduced to the study of their
continuous-time counterparts, which is much better understood. Coupled with standard assumptions of
MCMC sampling, our theory immediately yields the last-iterate Wasserstein convergence of many
advanced sampling schemes such as mirror Langevin, proximal, randomized mid-point, and Runge-Kutta
methods.

\end{abstract}

\section{Introduction}
\label{sec:introduction}

Many modern learning tasks involve sampling from a high-dimensional density $\pi \propto e^{-f}$,
where $f$ is a \emph{non-convex} potential representing, for instance, the loss function of a deep
neural network. 
To this end, an approach that has found wide success is to discretize the continuous-time
\emph{Langevin diffusion}
\begin{equation}
\label{eq:LD}\tag{LD}
  \dd\Lgv_t = -\grad f(\Lgv_t) \ddt + \sqrt{2} \dBt
\end{equation}
where $B_t$ is a Brownian motion \citep{wellingBayesianLearningStochastic2011}. The idea behind
this approach is that, since $\pi$ is the stationary distribution of \eqref{eq:LD}, 
one can expect a similar behavior for discretizations of \eqref{eq:LD}.
Such a framework has inspired numerous sampling schemes with per-iteration costs as cheap as
stochastic gradient descent, which are particularly suitable for large-scale approximate
probabilistic inference and Bayesian learning
\cite{wellingBayesianLearningStochastic2011,ahn2012bayesian,teh2016consistency}.
Moreover, several works have noticed that these Langevin-based schemes provide deep insights about
minimizing $f$ using stochastic oracles \citep{raginsky2017non, erdogdu2018global}, which serves as
an important step toward explaining the empirical success of training deep neural networks.

The convergence of Langevin-based non-convex sampling has therefore attracted significant interest
from both practitioners and theoreticians, whose intense study has led to a plethora of new
guarantees; see related work for details.
Despite such impressive progress, several challenges remain for the fully non-convex setup: 
\begin{itemize}
  \item The convergence is typically given on the \emph{averaged} iterates instead of the more
    natural \emph{last} iterates \citep{teh2016consistency, balasubramanian2022towards}. This is
    especially problematic from the perspective of understanding the minimization of $f$, as in
    practice, the last iterates of an optimization algorithm play the most pivotal role for
    downstream tasks.

  \item An additional notable drawback of the current theory is its predominant focus on the basic
    Euler-Maruyama discretization of \eqref{eq:LD} (see, \eg \cite{teh2016consistency,
    durmus2017nonasymptotic, balasubramanian2022towards}). As a result, the convergence analysis of
    more advanced sampling schemes remains largely unexplored in the fully non-convex regime
    \cite{shen2019randomized,he2020ergodicity,liStochasticRungeKuttaAccelerates2020,hsieh2018mirrored,ahn2021efficient, zhang2020wasserstein}.
\end{itemize}

\para{Contributions and Approaches}

To overcome the aforementioned challenges, our main contribution, from a high level, can be succinctly summarized as:
\begin{equation}
\tag{$\star$}
\label{eq:high-level}
  \parbox{\dimexpr\linewidth-5em}{%
    \strut
    Under mild assumptions, we prove that the iterates of a broad range of Langevin-based sampling schemes converge
    to the \emph{continuous-time} \eqref{eq:LD} in \emph{Wasserstein distance}.%
    \strut
  }
\end{equation}

Combining \eqref{eq:high-level} with classical results on Langevin diffusion
\citep{pavliotis2014stochastic} immediately yields the \emph{last-iterate} convergence in
\emph{Wasserstein distances} for a wide spectrum of sampling schemes, thus resolving all the
challenges mentioned above. To illustrate this point, we state a simple version of our main result.
\begin{theorem*}[Informal]
  Suppose we discretize
  \eqref{eq:LD} as
  \[
    \dpt_{k + 1} = \dpt_k - \step_{k+1}\prn{\grad f(\dpt_k) + \text{noise} + \text{bias}} +
    \sqrt{2\step_{k+1}}\,\xi_{k+1}
  \]
  with step-sizes $\crl{\step_k}_{k\in\N}$ and \iid standard Gaussians $\crl{\xi_{k}}_{k\in\N}$. Then, under an
  easy-to-verify condition on the bias (see \eqref{eq:bias-assumption} in
  \cref{ass:noise-bias}), $\crl{x_k}_{k\in\N}$ converges in Wasserstein distance to $\pi$.
  In addition, these conditions are satisfied by many advanced sampling schemes.
\end{theorem*}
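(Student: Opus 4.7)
The plan is to prove the theorem by reducing last-iterate convergence of the discrete iterates to the $\Wtwo$-ergodicity of the continuous-time Langevin diffusion \eqref{eq:LD}. The bridge is the Wasserstein asymptotic pseudotrajectory (WAPT) property: the discrete iterates, once interpolated in continuous time, should track the Langevin flow arbitrarily closely in $\Wtwo$ over any finite horizon, provided one starts the comparison late enough along the trajectory.

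First I would define the continuous-time interpolation $\overline{X}_t$ of $\{x_k\}$ along the grid $\tau_k := \sum_{j\le k}\step_j$, so that $\overline{X}_{\tau_k}=x_k$. For any fixed horizon $T>0$ and any grid time $s=\tau_N$, I would introduce a synchronous coupling between $\overline{X}_{s+\cdot}$ and a Langevin diffusion $\Lgv_\cdot$ initialised at the same state $\overline{X}_s$ and driven by the \emph{same} Brownian motion whose increments over $[\tau_k,\tau_{k+1}]$ generate the discrete Gaussians $\sqrt{2\step_{k+1}}\,\xi_{k+1}$ appearing in the scheme. This matching cancels the dominant diffusion terms of both processes, which is crucial for obtaining a $\Wtwo$ estimate (rather than, say, a TV bound that would degrade with dimension).

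Applying Itô's formula to $\|\overline{X}_{s+t}-\Lgv_t\|^2$ and taking expectations, the remaining contributions split into four pieces: (i) a Lipschitz drift term proportional to $\|\overline{X}-\Lgv\|^2$, absorbed by Grönwall; (ii) the accumulated noise contribution, whose martingale-difference structure together with a Robbins--Monro style condition $\sum\step_k^2<\infty$ forces it to vanish on $[0,T]$ as $s\to\infty$; (iii) the accumulated bias contribution, directly controlled by \eqref{eq:bias-assumption} of \cref{ass:noise-bias}; and (iv) the discretisation residual $\grad f(x_k)-\grad f(\overline{X}_t)$, which is of order $\sqrt{\step_{k+1}}$ in $L^2$ on each sub-interval and hence vanishes along the path. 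Collecting these yields $\Wtwo(\law{\overline{X}_{s+t}},\law{\Lgv_t})\to 0$ uniformly on $[0,T]$, i.e., $\overline{X}$ is a WAPT of \eqref{eq:LD}. Combining with the $\Wtwo$-ergodicity of \eqref{eq:LD} under the classical MCMC hypotheses of \citep{pavliotis2014stochastic} via a diagonal argument --- pick $T$ so that the Langevin flow started from $\law{\overline{X}_s}$ lies within $\eps/2$ of $\pi$ in $\Wtwo$, then pick $s$ so the coupling gap at time $T$ is within $\eps/2$ --- delivers $\Wtwo(\law{x_k},\pi)\to 0$.

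The principal obstacle I anticipate lies in steps (ii)--(iii): orchestrating the synchronous Brownian coupling so that the Gaussian innovations in the discrete update exactly cancel the corresponding Itô increment of $\Lgv$, while simultaneously showing that the noise contribution (which may be only conditionally unbiased and state-dependent rather than a true martingale difference with respect to the natural filtration) and the bias contribution both decay on every finite time window as $s\to\infty$. The step-size conditions and the precise form of \eqref{eq:bias-assumption} in \cref{ass:noise-bias} are tailored to exactly this purpose, so the analysis should reduce to careful bookkeeping with standard stochastic-calculus estimates once the correct filtration and decomposition are set up; verifying that advanced schemes (mirror, proximal, randomized midpoint, Runge--Kutta) indeed fit this abstract bias/noise template is then a separate, scheme-by-scheme check that I would defer to corollaries.
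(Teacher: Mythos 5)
Your first half (the WAPT part) is essentially the paper's argument: interpolate, couple synchronously through a shared Brownian motion so the diffusion terms cancel, and run Gr\"onwall on the squared distance, with the noise handled by martingale/Robbins--Monro summability and the bias by \eqref{eq:bias-assumption}. The paper organizes this through an intermediate \emph{Picard process} rather than a direct It\^o/Gr\"onwall comparison, but that is a presentational choice. One real omission, though: your step (iv) asserts that the residual $\grad f(x_k)-\grad f(\overline{X}_t)$ is $O(\sqrt{\step_{k+1}})$ in $L^2$, which requires an a priori bound $\E\nrm{\grad f(x_k)}^2=\bigoh(1/\step_{k+1})$ on the growing iterates. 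This is not free in the non-convex setting; it is exactly \cref{lem:gradient-bound-l2}, proved by a nontrivial induction that is the reason for the step-size condition \eqref{eq:strange-step-size}. You should not treat it as bookkeeping.

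The genuine gap is in your final ``diagonal argument.'' You propose: pick $T$ so that the flow started from $\law{\overline{X}_s}$ is within $\eps/2$ of $\pi$, then pick $s$ large so the coupling gap on $[0,T]$ is within $\eps/2$. But the mixing time $T$ you need depends on the initial law $\law{\overline{X}_s}$, which itself varies with $s$; in the generic non-convex setting there is no rate of convergence to $\pi$ that is uniform over initial conditions, and a priori the laws $\law{\overline{X}_s}$ could escape to infinity, forcing $T=T(s)\to\infty$. Meanwhile the WAPT error bound grows like $\exp(cT^2)$, so the two limits cannot be interchanged without additional input. This is precisely the circularity the paper breaks in \cref{thm:APTtoICT}: one must first establish the stability condition \eqref{eq:stability} ($\sup_k\E\nrm{x_k}^2<\infty$, obtained separately from dissipativity in \cref{thm:LRMstability,thm:LRMstability2}), which gives precompactness of $\{\law{x_k}\}$ in the Wasserstein topology, and then invoke Bena\"im's limit-set theorem for precompact pseudotrajectories together with a strict Lyapunov function (the KL divergence, whose derivative along the flow is the negative relative Fisher information) to show the only internally chain transitive set is $\{\pi\}$. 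Your proposal never introduces the moment bound on the iterates, which is not merely convenient but \emph{necessary} for $\Wtwo$-convergence, and without the limit-set machinery (or an equivalent uniform-in-$s$ ergodicity argument, which is unavailable here) the passage from WAPT to last-iterate convergence does not close.
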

This result is achieved via a new \emph{dynamical perspective} to study
Langevin-based sampling. More specifically, 
\begin{enumerate}
\item We introduce the \emph{Picard process}, which is the sampling analogue of Picard's method of
  successive approximations for solving ODEs \citep{coddington1955theory}. Contrary to most existing
  analyses, the Picard process allows us to completely bypass the use of relative entropy, which is
  the culprit for the appearance of averaged iterates \citep{durmus2017nonasymptotic}.%

\item Using the Picard process, we will prove that the iterates of various Langevin-based schemes
  generate a so-called \acdef{WAPT} for the continuous-time \eqref{eq:LD}. The main motivation for
    considering \ac{WAPT} is to connect Langevin-based schemes to the dynamical system theory of
    \citet{benaim1996asymptotic}, which works for metric spaces and is last-iterate by design, and
    therefore particularly suitable for our purpose. 

\item \looseness-1 Finally, under standard stability assumptions in the literature \citep{roberts1996exponential,
  meyn2012markov}, we show how a tandem of our \ac{WAPT} result and dynamical system theory yields
    the desirable convergence of various existing schemes, as well as motivates more efficient
    algorithms that enjoy the same rigorous guarantees.
\end{enumerate}

\para{Related work}

There is a vast literature on \emph{structured} non-convex
sampling, where one imposes extra assumptions on the target density. %
Under these conditions, one
can derive \emph{non-asymptotic} rates for Langevin-based schemes
\citep{raginsky2017non, cheng2018sharp, li2019stochastic, xu2018global, vempala2019rapid,
zou2019sampling, majka2020nonasymptotic, chewi2021analysis, ma2021there, mou2022improved}. 
Our work is orthogonal to these works as we study \emph{generic} non-convex sampling, an NP-hard problem whose convergence is asymptotic at best.

Most relevant to our paper are the works \cite{lamberton2002recursive, teh2016consistency,
benaim2017ergodicity, durmus2017nonasymptotic, balasubramanian2022towards}, which study the
asymptotic convergence of Langevin-based schemes under minimal regularity assumptions on $f$.
Compared to their results, our guarantees either improve upon existing ones or are incomparable; see
\cref{sec:comparison} for a more detailed comparison.

\section{The Langevin--Robbins--Monro Template}
\label{sec:template}

We consider the following general template for sampling algorithms: Starting from an
initial point, the iterates $\crl{\dpt_k}_{k\in\N}$ follow the recursion
\begin{equation}\label{eq:standard-template}\tag{LRM}
  \dpt_{k+1} = \dpt_k - \step_{k+1} \crl*{ v(\dpt_k) + \pert_{k+1}} +
  \sqrt{2\step_{k+1}}\,\sigma(x_k)\,\stdgauss_{k+1},
\end{equation}
where $\step_k$'s are step sizes, $v$ is a vector field, $\pert_k$'s are (random or deterministic)
perturbations, $\sigma$ is the state-dependent diffusion matrix, and $\stdgauss_k$'s are \iid
standard Gaussian random variables. In the sequel, we will further decompose the perturbation as
$\pert_k = \noise_k + \bias_k$, where $\noise_k$ is the (zero-mean) \emph{noise} and $\bias_k$ is
the \emph{bias}. We call this recursion the \acdef{LRM} template, as it is reminiscent of the
Robbins-Monro template for stochastic approximation
\citep{robbins1951stochastic}.

The generality of the \ac{LRM} template allows us to capture many existing algorithms and suggests
ways to design new ones. For illustration purposes, we showcase instances of
\eqref{eq:standard-template} with the following examples. Other examples (SGLD and proximal) are
provided in \cref{app:examples}. In the first three examples, the vector field $v$ in
\eqref{eq:standard-template} is $-\grad f$ and $\sigma \equiv 1$.

\begin{example}\label{ex:RMM}
  The \emph{Randomized Mid-Point Method}
  \citep{shen2019randomized,he2020ergodicity}
  is an alternative discretization scheme to Euler-Maruyama and has been proposed for both overdamped and
  underdamped Langevin diffusion. For the overdamped case, its iterates are
  \begin{equation}\label{eq:randomized-midpoint}\tag{RMM}
    \begin{aligned}
      \xmid{k} &= \dpt_k - \step_{k+1}\alpha_{k+1}\stochgrad f(x_k) +
        \sqrt{2\step_{k+1}\alpha_{k+1}}\, \stdgauss_{k+1}', \\
      x_{k+1} &= \dpt_k - \step_{k+1}\stochgrad f(\xmid{k}) +
        \sqrt{2\step_{k+1}}\, \stdgauss_{k+1},    
    \end{aligned}
  \end{equation}
  where $\crl{\alpha_k}$ are \iid and uniformly distributed in $[0,1]$, $\stdgauss_k,
  \stdgauss_k'$ are standard Gaussian random variables with cross-variance $\sqrt{\alpha_k}I$, and $\stochgrad f$ is a noisy
  evaluation of $\nabla f$. To cast \eqref{eq:randomized-midpoint} in
  the \ac{LRM} template, we set $\noise_{k+1} \coloneqq \stochgrad f(\xmid{k}) - \grad f(\xmid{k})$
  and $\bias_{k+1} \coloneqq \grad f(\xmid{k}) - \grad f(\dpt_k)$.$\hfill\qqed$
\end{example}

\begin{example}
\label{ex:ORMM}
  Inspecting the update rule of \eqref{eq:randomized-midpoint}, we see that it requires \emph{two}
  gradient oracle calls at each iteration. Inspired by the \emph{optimistic gradient methods} in
  optimization and online learning \citep{popov1980modification, chiang2012online,
  rakhlin2013optimization}, we propose to ``recycle'' the past gradients:
  \begin{equation}\label{eq:optimistic-randomized-midpoint}\tag{ORMM}
    \begin{aligned}
      \xmid{k} &= \dpt_k - \step_{k+1}\alpha_{k+1}\stochgrad f(x_{k-1/2}) +
      \sqrt{2\step_{k+1}\alpha_{k+1}}\stdgauss_{k+1}',\\
      x_{k+1} &= \dpt_k - \step_{k+1}\stochgrad f(\xmid{k}) +
      \sqrt{2\step_{k+1}}\, \stdgauss_{k+1},
    \end{aligned}
  \end{equation}
  where $\crl{\alpha_k}$, $\stdgauss_k, \stdgauss_k'$, and $\stochgrad f$ are the same as in
  \eqref{eq:randomized-midpoint}. This is again an \ac{LRM} scheme with 
  $\noise_{k+1} \coloneqq \stochgrad f(\xmid{k}) - \grad f(\xmid{k})$ and 
  $\bias_{k+1} \coloneqq \grad f(\xmid{k}) - \grad f(\dpt_k)$. 
  
  Notice that \eqref{eq:optimistic-randomized-midpoint} requires \emph{one} gradient oracle, thereby reducing the per-iteration cost of \eqref{eq:randomized-midpoint} by 2. To our knowledge, the scheme \eqref{eq:optimistic-randomized-midpoint} is new.$\hfill\qqed$
\end{example}

\begin{example}\label{ex:SRK-LD}
  In addition to the simple (stochastic) Euler-Maruyama discretization in \eqref{eq:sgld}, there exists a class of more sophisticated discretization methods of \eqref{eq:LD} known as higher-order integrators. The
  \emph{Stochastic Runge-Kutta method} \citep{liStochasticRungeKuttaAccelerates2020} is an example of an
  order $1.5$ integrator, with iterates 
  \begin{align*}
    h_1 &= x_k + \sqrt{2\step_{k+1}}\,\prn{c_1\xi_{k+1} + c_2\xi_{k+1}'} \\
    h_2 &= x_k - \step_{k+1}\stochgrad f(x_k) + \sqrt{2\step_{k+1}}\, \prn{c_3\xi_{k+1} + c_2\xi_{k+1}'}, \\
    x_{k+1} &= x_k - \tfrac{\step_{k+1}}{2}(\stochgrad f(h_1) + \stochgrad f(h_2)) +
    \sqrt{2\step_{k+1}}\,\xi_{k+1},
  \end{align*}
  where $\xi_{k+1}$ and $\xi_{k+1}'$ are independent standard Gaussian random variables, and
  $c_1,c_2,c_3$ are suitably chosen integrator constants. This algorithm is an \ac{LRM} scheme with
  $\noise_{k+1} \coloneqq \frac{1}{2}(\stochgrad f(h_1) - \grad f(h_1)) +  \frac{1}{2}( \stochgrad f(h_2))-
  \grad f(h_2))$ and 
  $\bias_{k+1} \coloneqq \frac{1}{2}(\grad f(h_1) + \grad f(h_2)) - \grad f(x_k)$.
\end{example}

\begin{example}\label{ex:Mirror-LD}
  The \emph{Mirror Langevin} algorithm \citep{hsieh2018mirrored,ahn2021efficient, zhang2020wasserstein}, which is the sampling analogue of the celebrated
  mirror descent scheme in optimization \cite{nemirovsky1983problem, beck2003mirror}, uses a strongly convex function $\phi$ to adapt to a favorable local geometry. In the dual space (\ie the image of $\grad \phi$), its iterates follow
  \begin{equation}\label{eq:mirror-ld}\tag{ML}
    x_{k+1} = x_k - \step_{k+1} \grad f(\grad \phi^*(x_k)) + \sqrt{2\step_{k+1}}\prn{\grad^2
    \phi^*(x_k)^{-1}}^{1/2}\,\stdgauss_{k+1},
  \end{equation}
  where $\phi^*$ is the \emph{Fenchel dual} of $\phi$ \cite{rockafellar2015convex}. In our framework, \eqref{eq:mirror-ld} fits
  into \eqref{eq:standard-template} by taking $v = -\grad f \circ \grad \phi^*$ and $\sigma =
  \prn{\grad^2 \phi^*}^{-1/2}$. %
  Additionally, one can also consider a stochastic version of \eqref{eq:mirror-ld} with noisy evaluations of
  $\grad f$.
\end{example}

\section{Technique Overview: A Dynamical System Perspective}
\label{sec:intuition}

The goal of our paper is to provide last-iterate guarantees for the general \ac{LRM} schemes
introduced in \cref{sec:template}. There are two equivalent, commonly considered, ways of
characterizing the dynamics of the iterates of an \ac{LRM} scheme. The first one is to view the
iterates $\crl{x_k}_{k\in\N}$ as a \emph{random} trajectory in $\R^d$, which is perhaps the most
natural way of describing a sampling algorithm. The second way is to view the \emph{distributions}
$\crl{\rho_k}_{k\in\N}$ of $\crl{x_k}_{k\in\N}$ as a \emph{deterministic} trajectory in the
\emph{Wasserstein space}. With these two characterizations in mind, in this section, we will devise
a new framework based on the dynamical system theory and present its high-level ideas.

To understand our novelty, it is important to contrast our framework to the existing Wasserstein
viewpoint towards Langevin-based sampling algorithms. Following the seminal work of
\citet{otto2001geometry}, one can view a sampling algorithm as the discretization of a class of
well-studied dynamical systems---\emph{gradient flows}. This viewpoint suggests using
\emph{Lyapunov} arguments, which has become the predominant approach in much prior work.

Despite its appealing nature, in the rest of this section, we will argue that Lyapunov analysis of
gradient flows is in fact \emph{not} suited for studying generic non-convex sampling. In particular,
we will show how our new framework is motivated to overcome the several important limitations of
gradient flow analysis. Finally, we give a high-level overview of the techniques used in our paper.

\para{Langevin Diffusion as Gradient Flows}

We denote by $\rho_t$ the probability density of $\Lgv_t$ in \eqref{eq:LD}, and consider
the continuous curve $t \mapsto \rho_t$ in the Wasserstein space $\Wsp$. In their seminal works,
\citet{jordan1998variational} and \citet{otto2001geometry} discover that this curve is the (exact)
gradient flow of the relative entropy functional; that is, defining the functional $F:\rho \mapsto
\KL{\rho}{e^{-f}}$, one has $\partial_t \rho_t = -\rgrad F(\rho_t)$,
where ``$\rgrad$'' is the gradient in the Wasserstein sense. This gradient flow viewpoint of
\eqref{eq:LD} thus provides a clear link between sampling in $\R^d$ and optimization in $\Wsp$.
Indeed, this suggests that the relative entropy is a natural choice for the \emph{Lyapunov function} of the
discrete-time sampling algorithm, which is a prominent approach for analyzing sampling
algorithms in recent years \citep{wibisono2018sampling, durmus2019analysis,
balasubramanian2022towards}.

Although the gradient flow viewpoint has led to a sequence of breakthroughs, it has a few important
shortcomings:
\begin{enumerate}[(a)]
  \item \label{iss:lyapunov-type-analysis}
    \looseness -1 The usual Lyapunov-type analysis for sampling algorithms focuses on bounding the
    change in relative entropy across iterations. This is extremely challenging when one considers
    more advanced sampling algorithms, as one has to understand the effect of the additive bias and
    noise of the algorithm  on the change of relative entropy. Crucially, this makes the Lyapunov
    analysis applicable only to the simple Euler-Maruyama discretization of
    \eqref{eq:LD},\footnote{
     While the Lyapunov-type analysis has been applied to elementary (\ie unbiased) discretization
     schemes for other SDEs, such as the under-damped (\ie kinetic) Langevin dynamics
     \cite{dalayan2020}, our primary focus in this paper remains centered on the over-damped
     Langevin diffusion and similar SDEs.
    }
    \ie
    $x_{k+1} = x_k - \step_{k+1} \grad f(x_k) + \sqrt{2\step_{k+1}} \, \xi_{k+1}$,
    and fails to capture more advanced and \emph{biased} sampling schemes such as
    \crefrange{ex:RMM}{ex:Mirror-LD}. Even for the simple \eqref{eq:sgld}, the presence of stochastic
    gradients significantly complicates the Lyapunov analysis and requires extra assumptions such as
    convexity \citep{durmus2019analysis} or uniform spectral gap \citep{raginsky2017non}.

  \item \label{iss:average-iterate}
    This gradient flow-based analysis often requires an extra \emph{averaging} step to decrease the
    relative entropy (see, \eg \citep{balasubramanian2022towards}). This is the main reason why
    many existing works provide guarantees only on the \emph{averaged} iterates ($\bar{\rho}_{k}
    \coloneqq \frac{1}{k}\sum_{i=1}^k \rho_i$) instead of the last ones ($\rho_k$).
\end{enumerate}

In this paper, we overcome these limitations by introducing a new perspective, whose two ingredients
are as follows.

\para{Wasserstein Asymptotic Pseudotrajectories}
A notion that will play a pivotal role in our analysis is the \acdef{WAPT}, which is a measure of
``asymptotic closeness'' in the Wasserstein sense, originally defined by
\citet{benaim1996asymptotic} for metric spaces:
\begin{definition}[\acl{WAPT}]\label[definition]{def:wapt}
  We say the stochastic process $\prn{X_t}_{t\geq 0}$ is a \acdef{WAPT} of the SDE 
  \begin{equation}\label{eq:sde}\tag{SDE}
    \dd \Phi_t = v(\Phi_t)\dd t + \sigma(\Phi_t)\dd B_t
    \textstyle
  \end{equation}
  if, for all $T>0$,
  \begin{equation}\label{eq:wapt-def}
    \lim_{t\to \infty} \sup_{0 \leq s \leq T} \Wtwo(X_{t + s}, \Xfl{s}{t}) = 0.
  \end{equation}
  Here, $\Xfl{s}{t}$ is the solution of the SDE at time $s$ initialized at $X_t$, and $\Wtwo$ is the
  2-Wasserstein distance.
\end{definition}

Despite the seemingly convoluted definition, \ac{WAPT} can be intuitively understood as follows: Let
$\{x_k\}_{k\in\N}$ be the iterates of a sampling scheme. Then, \eqref{eq:wapt-def} simply posits
that for sufficiently large $m$, one cannot distinguish between the ``tail'' iterates
$\{x_k\}_{k\geq m}$ versus the SDE solution \emph{starting at $x_m$}, up to arbitrarily small
error measured in terms of the Wasserstein distance. Since we are only interested in the asymptotic
behavior of $x_k$, these controls on the tail iterates will suffice to conclude the last-iterate
convergence.%
\footnote{\cref{def:wapt} is phrased in terms of a continuous-time stochastic process
  $(X_t)_{t\geq 0}$. The discrete iterates $\crl{x_k}_{k \in \N}$ can be converted to a
continuous-time process through a suitable interpolation; see \eqref{eq:interpolation}.}

Importantly, from the perspective of \ac{WAPT}, the Langevin diffusion \eqref{eq:LD} (or more
generally, $\Xfl{s}{t}$) is simply viewed as a generic dynamical system and \emph{not} as a gradient
flow. In particular, \emph{relative entropy will play no role throughout our analysis}, thereby
resolving issue \ref{iss:average-iterate}.%

\para{\acl{LRM} Schemes}
We have seen that the \ac{LRM} template in \cref{sec:template} is capable of capturing a broad range
of existing and new algorithms in a unified way. To resolve the remaining issue
\ref{iss:lyapunov-type-analysis}, we will further rely on the \ac{LRM} template: for proving that
\eqref{eq:standard-template} generates a \ac{WAPT} of the corresponding SDE, we show that the key
condition \eqref{eq:wapt-def} in \ac{WAPT} can be reduced to checking an easy-to-verify bound on the
perturbation terms $\pert_k$.

To achieve this, the most important step in our proof, which distinguishes our analysis from all
existing works in non-convex sampling, is the construction of the so-called \emph{Picard process},
the natural generalization of the Picard's successive approximation method
\citep{coddington1955theory} from ordinary differential equations to \emph{stochastic} differential
equations.
In the stochastic approximation literature, similar techniques have been successfully applied to
study optimization and games in various settings such as on Riemannian or primal-dual spaces
\citep{hsieh2021limits, karimi2022dynamics, mertikopoulos2022learning}. The application to sampling has also been previously explored by \citet{chau2021stochastic, bubeck2018sampling} in different contexts. What distinguishes our work from the existing literature is the advantage of generalizing the Picard process to encompass a vastly wider class of algorithms, specifically the \ac{LRM} schemes. Moreover, the integration of the Picard process with the theory of \ac{WAPT} plays a pivotal role in our analysis, and both of these aspects present original contributions.

\begin{figure}[t]
  \centering
  \begin{tikzpicture}[>=stealth,every node/.style={shape=rectangle,draw,rounded corners,font=\scriptsize},]
  \node at (0,-1.3) [draw=blue!30] (apt) {\begin{tabular}{c} Dynamics \\ via \ac{WAPT} \\ (\cref{thm:APT}) \end{tabular}};
  \node at (6.3,-1.0) [draw=red!30] (diss) {\begin{tabular}{c} Dissipativity \\
  (\cref{thm:LRMstability}) \end{tabular}};
  \node at (6.3,-2.2) [draw=red!30] (wdiss) {\begin{tabular}{c} Weak \\ Dissipativity \\
  (\cref{thm:LRMstability2}) \end{tabular}};
  \node at (3.5,-1.3) [draw=blue!30] (stab) {\begin{tabular}{c} Stability \\ (\cref{eq:stability}) \end{tabular}};

  \begin{scope}[on background layer]
    \node [fit=(apt) (stab),dashed,dash pattern=on 1mm off 1mm] (scp)
    [label=180:{\begin{tabular}{l}\textit{Dynamical}\\ \textit{Perspective} \end{tabular}}] {};
  \end{scope}

  \node at (1.85, -1.5) (plus) {+};
  \node at (1.85, -2.9) [draw=black!30!green!100] (last) {\begin{tabular}{c} Last-Iterate Convergence \\ in
  $\Wsp$ \\(\cref{thm:APTtoICT}) \end{tabular}};

  \draw [->,double] (diss.west) to  (stab.10);
  \draw [->,double] (wdiss.west) to (stab.350);
  
  \draw [-] (apt.east) to [out=0,in=180] (plus.west);
  \draw [-] (stab.west) to [out=180,in=0] (plus.east);
  
  \draw [->,double] (plus.south) to (last.north);
\end{tikzpicture}
  \caption{High-level overview of the two components of the dynamical perspective.}
  \label{fig:high-level-diagram}
\end{figure}
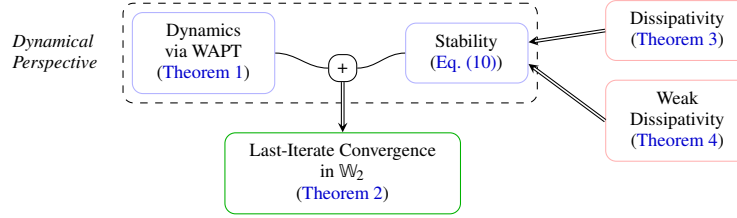

\para{Framework overview}

To conclude, for proving last-iterate convergence, we proceed as follows:

\begin{enumerate}
  \item For a given \ac{LRM} scheme $\crl{\dpt_k}_{k\in\N}$, we first construct a continuous-time
    trajectory $(\cpt_t)_{t\geq 0}$ via \emph{interpolating} the iterates (see \eqref{eq:interpolation}). 
  \item We prove that $(\cpt_t)$ constitutes a \ac{WAPT} of the SDE (see \cref{thm:APT}).
    This step relies heavily on the construction of the aforementioned Picard process.
  \item By invoking the dynamical system theory of \citet{benaim1996asymptotic}, 
    the convergence of \ac{LRM} schemes reduces to simply checking the \emph{stability condition}
    (\cref{thm:APTtoICT}). In the Wasserstein space, this condition translates into boundedness of
    the second moments of the iterates $\crl{\dpt_k}$, for which there is a plethora of approaches;
    we present two such methods in \cref{sec:stability}. 
\end{enumerate}

\cref{fig:high-level-diagram} depicts a high-level overview of the ingredients needed in our
framework, and their corresponding theorems.

\section{The Dynamics of Langevin--Robbins--Monro Schemes}
\label{sec:theory-statement}

\looseness -1 In this section, we view \eqref{eq:standard-template} as a noisy and biased
discretization of \eqref{eq:LD}. To make this analogy precise, let $(B_t)_{t\geq 0}$ be a Brownian
motion defined on a filtered probability space with filtration $(\Fil_t)_{t\geq 0}$, and define
$\tau_k = \sum_{n=1}^{k} \step_n$ to be the effective time that has elapsed at the iteration $k$. 
Using the Brownian motion, we can rewrite \eqref{eq:standard-template} as 
\begin{equation}\label{eq:standard-template-with-bm}
  \dpt_{k+1} = \dpt_k - \step_{k+1} \crl*{ v(\dpt_k) + \pert_{k+1}} +
  \sigma(\dpt_k) \,(B_{\tau_{k+1}} - B_{\tau_k}),
\end{equation}
assuming that the filtration satisfies $Z_k \in \Fil_{\tau_k}$.%
\footnote{One can augment the filtration of the Brownian motion by the $\sigma$-algebra of $Z_k$ at
times $\crl{\tau_k}_{k\in\N}$.}
The (continuous-time) \emph{interpolation} $(\cpt_t)_{t\geq 0}$ of $\crl{x_k}_{k\in\N}$ is then
defined as the adapted process
\begin{equation}\label{eq:interpolation}
  \begin{aligned}
    \cpt_t &= \dpt_k - \prn*{t- \tau_k}\crl{v(\dpt_k) + \exof{\pert_{k+1}
    \given \Fil_{t}}} + \sigma(\dpt_k)\,(B_t - B_{\tau_k}),\quad \text{for}\  t\in[\tau_k, \tau_{k+1}].
  \end{aligned}
\end{equation}
In addition, for a fixed $t$, consider the Brownian motion
$(B_s^{(t)})_{s\geq 0}$ where $B_s^{(t)} \coloneqq B_{t+s} - B_t$, and define the
\emph{\meandynamics} $(\Xfl{s}{t})_{s\geq 0}$ as the (strong) solution of \eqref{eq:sde} initialized
at $X_t$.
It is important to note that $\Xflt{t}$ and
$\cpt$ are synchronously coupled by sharing the same Brownian motion.

\subsection{Technical Assumptions and Requirements}

We now introduce the basic technical assumptions and discuss their generality.

\begin{assumption}\label{ass:f}
  The vector field $v$ is $L$-Lipschitz, and satisfies 
  $\inner{x, v(x)} \leq C_v(1 + \nrm{x})$ for some $C_v > 0$. Moreover, $\sigma$ is $L$-Lipschitz
  and is bounded in Hilbert-Schmidt norm.
\end{assumption}
Lipschitzness of $v$ is a standard assumption and is also required to ensure the existence of a
unique strong solution of \eqref{eq:sde}. The second assumption on the vector field is exceedingly
weak and when $v = -\grad f$, is satisfied even for distributions without moments. The assumptions
on diffusion coefficient $\sigma$ are already satisfied when $\sigma \equiv 1$, and we show
that it holds for practical schemes such as \cref{ex:Mirror-LD}.

\begin{assumption}\label{ass:step-size}
  The Robbins-Monro summability conditions hold: $\sum_{k=1}^\infty \step_k = \infty$ and
  $\sum_{k=1}^\infty \step_k^2 < \infty$. Moreover, for some constant $P$ to be defined in
  \eqref{eq:definition-p}, we have
  \begin{equation}\label{eq:strange-step-size}
    \step_{k+1}/\step_{k} + P\step_{k}\step_{k+1} < 1 - \step_{k}, \quad \forall k.
  \end{equation}
\end{assumption}

The Robbins-Monro step size conditions are standard in the non-convex sampling literature
\citep{lamberton2002recursive,lemaire2005estimation,durmus2017nonasymptotic,balasubramanian2022towards}.
For  \eqref{eq:strange-step-size}, it can be verified that condition is satisfied even for
slowly-decreasing step sizes such as $\step_k \propto (\sqrt{k}\log k)^{-1}$, which hence is not
restrictive.

\begin{assumption}\label{ass:noise-bias}
  The noises $\crl{\noise_k}_{k\in\N}$ form a martingale difference sequence, \ie $\exof{\noise_{k+1}
  \given \noise_k} = 0$, and have uniformly bounded second moments.  In addition, the bias terms satisfy
  \begin{equation}\label{eq:bias-assumption}
  \exof{\nrm{\bias_{k+1}}^2 \given \Fil_{\tau_k}} 
  = \bigoh(\step_{k+1}^2\nrm{v(x_k)}^2 + \step_{k+1}).
  \end{equation}
\end{assumption}
A martingale difference sequence is more general than an \iid sequence, allowing the noise to be
state-dependent. The bias condition \eqref{eq:bias-assumption} simply states that the bias shall not
overpower the signal $v(x_k)$, and, as we show later, is satisfied by all our examples.

\subsection{From Discrete to Continuous: \texorpdfstring{\ac{LRM}}{LRM} Schemes and \texorpdfstring{\acp{WAPT}}{WAPTs}}

We are now in a position to state our main theorems. Our first result below establishes a precise
link between the discrete-time \eqref{eq:standard-template} and the continuous-time \eqref{eq:sde}.

\begin{theorem}\label{thm:APT}
  Under \cref{ass:f,ass:noise-bias,ass:step-size}, the interpolation \eqref{eq:interpolation} of an
  \ac{LRM} scheme %
  is a \acl{WAPT} of \eqref{eq:sde}.
\end{theorem}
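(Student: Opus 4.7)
The plan is to show that the $\Wtwo$-distance between the interpolation $\cpt_{t+s}$ and the \meandynamics $\Xfl{s}{t}$ vanishes uniformly in $s \in [0,T]$ as $t \to \infty$. Synchronous coupling through the common Brownian motion, together with the shared initialization $\cpt_t = \Xfl{0}{t}$, bounds the squared Wasserstein distance by $\E\nrm{\cpt_{t+s} - \Xfl{s}{t}}^2$, so it is enough to control the latter. The central device is to interpose the \emph{Picard process} $\Xpit{t}$, obtained by one step of Picard's successive-approximation iteration with the interpolation substituted into the coefficients of \eqref{eq:sde}; schematically,
\[
  \Xpi{s}{t} := \cpt_t + \int_0^s v(\cpt_{t+u})\dd u + \int_0^s \sigma(\cpt_{t+u})\dd B_u^{(t)}.
\]
Decomposing $\cpt_{t+s} - \Xfl{s}{t} = (\cpt_{t+s} - \Xpi{s}{t}) + (\Xpi{s}{t} - \Xfl{s}{t})$ cleanly separates the \emph{discretization error} from the \emph{pure SDE comparison}.

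The Picard-to-flow term $\Xpi{s}{t} - \Xfl{s}{t}$ consists of integrals of Lipschitz coefficients, evaluated at $\cpt_{t+u}$ versus $\Xfl{u}{t}$, against common Brownian increments. By the Lipschitz assumptions on $v$ and $\sigma$ in \cref{ass:f} together with a Burkholder--Davis--Gundy estimate,
\[
  \E \sup_{s \leq T} \nrm{\Xpi{s}{t} - \Xfl{s}{t}}^2 \leq C(T) \int_0^T \E \nrm{\cpt_{t+u} - \Xfl{u}{t}}^2 \dd u,
\]
so after the triangle inequality, Gronwall's lemma reduces the whole problem to proving $\E \sup_{s \leq T} \nrm{\cpt_{t+s} - \Xpi{s}{t}}^2 \to 0$.

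The interpolation-to-Picard difference telescopes over the iteration intervals $[\tau_j, \tau_{j+1}]$ lying inside $[t, t+T]$, producing four error terms: (i) a \emph{drift discretization error} $\int [v(\cpt_{t+u}) - v(\dpt_{j(u)})] \dd u$, bounded by Lipschitz $v$ applied to the within-interval oscillation of the interpolation; (ii) an \emph{accumulated bias} $\sum_j \step_{j+1}\bias_{j+1}$, whose squared $L^2$ norm is controlled by \eqref{eq:bias-assumption}; (iii) a \emph{martingale noise} $\sum_j \step_{j+1} \noise_{j+1}$, handled by Doob's maximal inequality using the martingale-difference property and uniform second-moment bound of \cref{ass:noise-bias}; and (iv) a \emph{diffusion discretization error} $\int [\sigma(\cpt_{t+u}) - \sigma(\dpt_{j(u)})] \dd B_u^{(t)}$, bounded by the Itô isometry and Lipschitz $\sigma$. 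Each contribution is ultimately dominated by a tail of the series $\sum_{k : \tau_k \geq t} \step_{k+1}^2 \bigl(1 + \E\nrm{v(\dpt_k)}^2\bigr)$, which vanishes as $t\to\infty$ thanks to $\sum_k \step_k^2 < \infty$ in \cref{ass:step-size}.

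The hard part is therefore securing the a priori second-moment bound $\sup_k \E \nrm{\dpt_k}^2 < \infty$, equivalently $\sup_k \E \nrm{v(\dpt_k)}^2 < \infty$ by Lipschitzness: the growth condition $\inner{x, v(x)} \leq C_v(1 + \nrm{x})$ is only a one-sided linear bound and does not by itself give a contracting Lyapunov condition. This is precisely where the subtle step-size inequality \eqref{eq:strange-step-size} intervenes. Squaring the \ac{LRM} recursion \eqref{eq:standard-template-with-bm} and invoking the growth bound, the martingale-difference property, and the bias bound yields a recursion of the form $\E \nrm{\dpt_{k+1}}^2 \leq (1 + A\step_{k+1}) \E \nrm{\dpt_k}^2 + B\step_{k+1}$ with a quadratic-in-step correction whose coefficient pins down the constant $P$; the inequality \eqref{eq:strange-step-size} is then exactly the requirement that makes this recursion remain uniformly bounded in $k$. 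Once moment boundedness is in place, the four error terms above are tamed uniformly and the \ac{WAPT} property of \cref{def:wapt} follows.
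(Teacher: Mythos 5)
Your overall architecture matches the paper's proof: the same Picard process, the same decomposition $\cpt_{t+s}-\Xfl{s}{t}=(\cpt_{t+s}-\Xpi{s}{t})+(\Xpi{s}{t}-\Xfl{s}{t})$, a Gr\"onwall argument on the Picard-to-flow term, and a splitting of the interpolation-to-Picard term into drift/diffusion discretization errors plus accumulated noise and bias. (Using Burkholder--Davis--Gundy rather than the It\^o isometry is harmless; the paper only needs $\sup_s\E$, not $\E\sup_s$.)

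There is, however, a genuine gap in the step you yourself flag as ``the hard part.'' You claim the proof requires the uniform bound $\sup_k\E\nrm{\dpt_k}^2<\infty$ and that the recursion $\E\nrm{\dpt_{k+1}}^2\leq(1+A\step_{k+1})\E\nrm{\dpt_k}^2+B\step_{k+1}$ combined with \eqref{eq:strange-step-size} delivers it. Neither half of this is right. First, such a recursion cannot stay bounded when $\sum_k\step_k=\infty$ (the product $\prod_k(1+A\step_{k+1})$ diverges), and more fundamentally no uniform moment bound is available under the hypotheses of the theorem: \cref{ass:f} only gives the one-sided linear growth $\inner{x,v(x)}\leq C_v(1+\nrm{x})$, which is satisfied by $v\equiv 0$, in which case the iterates are a scaled random walk with $\E\nrm{\dpt_k}^2\asymp\tau_k\to\infty$. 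Uniform boundedness \eqref{eq:stability} is precisely what the paper \emph{cannot} prove at this stage; it is deferred to \cref{sec:stability} and requires the extra dissipativity assumptions. Second, the proof does not need it. What \cref{lem:gradient-bound-l2} actually establishes is the weaker, growing bound $\E\nrm{\dpt_k}^2=\bigoh(1/\step_{k+1})$, hence $\E\nrm{v(\dpt_k)}^2=\bigoh(1/\step_{k+1})$ and $\E\nrm{\bias_{k+1}}^2=\bigoh(\step_{k+1})$. This suffices because every place these moments enter, they are multiplied by at least $\step_{k+1}^2$: the one-step displacement satisfies $\E\nrm{X_u-\Xbar_u}^2\leq 4\gammabar(u)^2\E\nrm{v(\Xbar_u)}^2+\dots=\bigoh(\gammabar(u))$, and the accumulated bias over a window of length $T$ is $\sum_i\step_{i+1}\E\nrm{\bias_{i+1}}^2=\bigoh(\sum_i\step_{i+1}^2)\to 0$. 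To prove the $\bigoh(1/\step_{k+1})$ bound one must exploit that the only term in the squared recursion that is \emph{linear} in $a_k=\E\nrm{\dpt_k}^2$ carries a factor $\step_{k+1}^2$ (coming from $\step_{k+1}^2\nrm{v(\dpt_k)}^2$ and the bias), while the inner-product term contributes only $\step_{k+1}\sqrt{a_k}$; the resulting recursion $a_{k+1}-a_k\leq P\step_{k+1}^2a_k+Q\step_{k+1}\sqrt{a_k}+R\step_{k+1}$ is then closed by induction on $h_k=\step_{k+1}^2a_k\leq M\step_{k+1}$, and \eqref{eq:strange-step-size} is exactly the condition that lets this induction propagate --- not a condition for uniform boundedness. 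Your writeup as it stands would either assert a false intermediate claim or silently import a hypothesis (dissipativity) that the theorem does not make.
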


\para{Sketch of the Proof for \cref{thm:APT}}

The proof of this theorem is heavily based on the notion of the Picard process and iterate moment
bounds. The complete proof can be found in \cref{app:proofs-lemmas}.

\para{Step 1: The Picard Process}
For a fixed $t > 0$, recall the construction of the interpolation \eqref{eq:interpolation} and the
\meandynamics. Central to our analysis is the \emph{Picard process}, defined as
\begin{equation}\label{eq:picard-sde}
  \Xpi{s}{t} = X_t + \int_0^s v(\cpt_{t+u})\dd u + \int_0^s \sigma(\cpt_{t+u})\, \dd B_u^{(t)}.
\end{equation}
The Picard process is adapted and is (synchronously) coupled with
the \meandynamics and the interpolation. We think of the Picard process as one step of the
\emph{Picard iteration} for successive approximations to solve ODEs. This means, intuitively,
that its trajectory should be close to the original interpolation, as well as to that of the
\meandynamics, playing the role of a ``bridge''.

Fix $T > 0$. For $s \in [0, T]$, we decompose the distance between the interpolation $X_t$ in
\eqref{eq:interpolation} and the \meandynamics~as
\begin{equation}\label{eq:main-decomp}
  \tfrac{1}{2}\nrm{\cpt_{t + s} - \Xfl{s}{t}}^2 
  \leq \nrm{\Xpi{s}{t} - \Xfl{s}{t}}^2 + \nrm{\cpt_{t+s} - \Xpi{s}{t}}^2.
\end{equation}
We now bound each term of the decomposition. By synchronous coupling of the processes, 
Lipschitzness of $v$, and It\^{o} isometry, \cref{lem:bound-dist-picard-mean} bounds the first term as
\begin{equation}\label{eq:bound-gronwall}
  \nrm{\Xpi{s}{t} - \Xfl{s}{t}}^2 \leq 2(T+1) L^2 \int_0^s \nrm{\Xfl{u}{t} - \cpt_{t+u}}^2 \,\dd u.
\end{equation}
This will be suitable for later use of Gr\"onwall's lemma.

\para{Step 2: Accumulated Noise and Bias}
For the rest of the proof, we need some extra notation. Define $m(t) \coloneqq \sup\{ k \geq 0 :
\tau_k \leq t\}$ and the piecewise-constant process $\Xbar_t \coloneqq \dpt_{m(t)}$. Going back to
the second term of \eqref{eq:main-decomp}, observe that 
\begin{align}
  \cpt_{t+s} - \Xpi{s}{t} 
  = \int_t^{t+s} v(\Xbar_u) - v(X_{u}) \dd u + \int_0^{s}\sigma(\Xbar_{t+u}) - \sigma(X_{t+u})\dd
  B_u^{(t)} - \Delta_\pert(t,s), \label{eq:picard-minus-interpolation}
\end{align}
where $\Delta_\pert(t,s)$ is the accumulated noise and bias from time $t$ to time $t+s$.
It is expected that $\nrm{\Delta_\pert(t,s)}$ eventually becomes negligible, since the step size
becomes small. The next lemma confirms this intuition.

\begin{restatable}{lemma}{lembounddeltaZ}\label{lem:sup-bound-bias-noise-cumulative}
  Suppose \cref{ass:f,ass:noise-bias,ass:step-size} hold. Then, for any fixed $T > 0$ we have
  \[ 
    \lim_{t\to \infty} \sup_{0\leq s \leq T} \E\nrm{\Delta_Z(t, s)}^2 = 0.
  \]
\end{restatable}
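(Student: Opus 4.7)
}

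The plan is to make the informal content of $\Delta_Z(t,s)$ explicit, split it into a noise and a bias contribution, and bound each separately using the ingredients that the three standing assumptions provide. Writing $K=m(t)$ and $k_s=m(t+s)$ and matching the decomposition \eqref{eq:picard-minus-interpolation} against the telescoped LRM recursion, one sees that
\[
\Delta_Z(t,s) \;=\; \sum_{j=K}^{k_s-1}\step_{j+1} Z_{j+1}
\;+\; (t+s-\tau_{k_s})\,\E[Z_{k_s+1}\mid \Fil_{t+s}]
\;-\; (t-\tau_K)\,\E[Z_{K+1}\mid \Fil_{t}],
\]
so the two fractional boundary contributions are each at most $\step_{K+1}$ or $\step_{k_s+1}$ in length. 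Decomposing $Z_k = \noise_k + \bias_k$ splits $\Delta_Z = M(t,s) + \mathcal{B}(t,s)$ into a (conditional) martingale part driven by $\noise$ and a drift-like part driven by $\bias$, which I will bound in turn.

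For the noise piece $M(t,s)$, the martingale-difference hypothesis on $\crl{\noise_k}$ together with the uniform $L^2$ bound in \cref{ass:noise-bias} gives orthogonality across indices, so
\[
\E\nrm{M(t,s)}^2 \;\leq\; C\!\!\sum_{j=K}^{k_s-1}\step_{j+1}^{\,2} \;+\; C\bigl(\step_{K+1}^{\,2} + \step_{k_s+1}^{\,2}\bigr)
\;\leq\; C\!\!\sum_{j\geq K}\step_{j+1}^{\,2},
\]
and the right-hand side is independent of $s\in[0,T]$ and tends to $0$ as $t\to\infty$ by the Robbins--Monro condition $\sum_k \step_k^2<\infty$. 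For the bias piece, I apply Cauchy--Schwarz in the form $\nrm{\sum \step_{j+1} \bias_{j+1}}^2 \leq \prn{\sum \step_{j+1}}\prn{\sum \step_{j+1}\nrm{\bias_{j+1}}^2}$ and use $\sum_{j=K}^{k_s-1}\step_{j+1}\leq T+\step_{K+1}$ to obtain
\[
\E\nrm{\mathcal{B}(t,s)}^2 \;\leq\; C(T+1)\sum_{j=K}^{k_s-1}\step_{j+1}\,\E\nrm{\bias_{j+1}}^2 + (\text{boundary}),
\]
and then plug in the structural bound \eqref{eq:bias-assumption}, yielding a tail sum of the form $\sum_{j\geq K}\prn{\step_{j+1}^{\,3}\,\E\nrm{v(x_j)}^2 + \step_{j+1}^{\,2}}$, which again is uniform in $s\in[0,T]$ and vanishes as $t\to\infty$.

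The only genuinely non-routine ingredient is a uniform-in-$k$ second-moment bound $\sup_k\E\nrm{x_k}^2<\infty$, which by Lipschitzness of $v$ gives $\sup_k \E\nrm{v(x_k)}^2<\infty$ and thereby closes the bias estimate. This is precisely the \emph{stability} property that the framework diagram in \cref{fig:high-level-diagram} flags, and it will be established independently via the (weak) dissipativity assumption $\inner{x,v(x)}\leq C_v(1+\nrm{x})$ of \cref{ass:f} combined with the step-size restriction \eqref{eq:strange-step-size} of \cref{ass:step-size}; I expect this to be the main technical obstacle, because the standard Lyapunov argument for $\E\nrm{x_k}^2$ must absorb both the stochastic gradient noise and the biased perturbation $\bias_k$, which is exactly what \eqref{eq:bias-assumption} is tailored to. Once this moment bound is in hand, the two estimates above combine to give $\sup_{s\in[0,T]}\E\nrm{\Delta_Z(t,s)}^2\to 0$, as claimed.
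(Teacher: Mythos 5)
Your decomposition of $\Delta_Z$ into a martingale part and a bias part, the orthogonality argument giving $\E\nrm{M(t,s)}^2 \leq C\sum_{j\geq K}\step_{j+1}^2 \to 0$, and the Cauchy--Schwarz step on the bias sum are all exactly what the paper does. The gap is in how you close the bias estimate. You declare that the needed ingredient is the uniform moment bound $\sup_k \E\nrm{x_k}^2 < \infty$, and you claim it will follow from the condition $\inner{x, v(x)} \leq C_v(1+\nrm{x})$ in \cref{ass:f} together with the step-size condition. That condition is \emph{not} a dissipativity assumption: it is an upper bound with a \emph{positive} right-hand side and provides no contraction whatsoever (it is satisfied, e.g., by $v \equiv 0$, in which case the iterates are a scaled Gaussian random walk and $\E\nrm{x_k}^2 \sim \tau_k \to \infty$). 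Uniform boundedness of second moments is precisely the stability condition \eqref{eq:stability}, which the paper deliberately does \emph{not} assume for this lemma --- it is established separately in \cref{thm:LRMstability,thm:LRMstability2} under the genuinely dissipative \cref{asm:dissipativity,asm:weakdissipativity}, and the whole architecture of the paper (\cref{fig:high-level-diagram}) rests on the WAPT property holding \emph{without} stability. So as written, your argument imports a hypothesis that is neither available nor true under \crefrange{ass:f}{ass:noise-bias} alone.

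The fix is the paper's \cref{lem:gradient-bound-l2}: under \crefrange{ass:f}{ass:noise-bias} only, one can prove the weaker \emph{growth} bound $\E\nrm{x_k}^2 = \bigoh(1/\step_{k+1})$ by an induction on $h_k = \step_{k+1}^2\E\nrm{x_k}^2$, which is where the step-size condition \eqref{eq:strange-step-size} actually enters. This gives $\E\nrm{v(x_j)}^2 = \bigoh(1/\step_{j+1})$ and hence, via \eqref{eq:bias-assumption}, $\E\nrm{\bias_{j+1}}^2 = \bigoh(\step_{j+1})$. Plugging this into your own tail sum $\sum_{j\geq K}\prn{\step_{j+1}^{3}\E\nrm{v(x_j)}^2 + \step_{j+1}^{2}}$ yields $\bigoh\prn{\sum_{j\geq K}\step_{j+1}^2} \to 0$ by Robbins--Monro, so your estimate closes with the weaker bound --- you reached for a stronger ingredient than you need, and the stronger one is unobtainable in this generality. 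With that substitution (and carrying the boundary terms through the $L^2$-contraction property of conditional expectation, as you already do for the noise), the proof matches the paper's.
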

\para{Step 3: Gradient Moment Bounds}
Based on \eqref{eq:picard-minus-interpolation} and \cref{lem:sup-bound-bias-noise-cumulative},
bounding the distance between the Picard process and the interpolation essentially reduces to
bounding how much the discrete algorithm ``moves'' during one iteration in expectation. This, in
turn, depends on how large the moments of $\nrm{v(x_k)}$ grow per iteration, which is controlled by
the following lemma:
\begin{restatable}{lemma}{gradientboundltwo}
  \label{lem:gradient-bound-l2}
  Let $\crl{\dpt_k}_{k\in\N}$ be the iterates of \eqref{eq:standard-template} and suppose
  \cref{ass:f,ass:noise-bias,ass:step-size} hold. Then, $\E\nrm*{\dpt_k}^2 =
  \bigoh(1/\step_{k+1})$.
  This in turn implies $\E\nrm*{v(\dpt_k)}^2 = \bigoh(1/\step_{k+1})$ and
  $\E\nrm{\bias_{k+1}}^2 = \bigoh(\step_{k+1})$.
\end{restatable}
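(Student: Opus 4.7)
My plan is to derive a Lyapunov-type recursion on $V_k := \E\nrm{\dpt_k}^2$ and to close it using the tailored step-size condition \eqref{eq:strange-step-size}.

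First, I would start from \eqref{eq:standard-template-with-bm}, expand $\nrm{\dpt_{k+1}}^2$, and take expectations. Cross-terms against the Brownian increment vanish by adaptedness, and the noise cross-term $\E\inner{\dpt_k, \noise_{k+1}}$ vanishes by the martingale-difference assumption. The Brownian quadratic variation contributes $\step_{k+1}\E\nrm{\sigma(\dpt_k)}_{\mathrm{HS}}^2 = \bigoh(\step_{k+1})$ via the HS bound in \cref{ass:f}. One subtle point: the cross-term $\E\inner{\pert_{k+1}, \sigma(\dpt_k)(B_{\tau_{k+1}} - B_{\tau_k})}$ need not vanish, since $\pert_{k+1}$ may depend on the Brownian increment (as in \eqref{eq:randomized-midpoint}); I would control it by Cauchy--Schwarz to $\bigoh(\step_{k+1}^{3/2})$, a lower-order contribution.

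Next, I would bound the surviving terms via Lipschitzness of $v$ (giving $\nrm{v(\dpt_k)}^2 \leq 2L^2\nrm{\dpt_k}^2 + 2\nrm{v(0)}^2$ and $\abs{\inner{\dpt_k, v(\dpt_k)}} \leq L\nrm{\dpt_k}^2 + \nrm{v(0)}\nrm{\dpt_k}$ by Cauchy--Schwarz), combined with \eqref{eq:bias-assumption} (giving $\E\nrm{\bias_{k+1}}^2 = \bigoh(\step_{k+1}^2 V_k + \step_{k+1})$) and Young's inequality on the $\inner{\dpt_k, \bias_{k+1}}$ cross-term. Collecting every contribution produces a recursion of the form
\[
  V_{k+1} \leq (1 + P\step_{k+1})V_k + C\step_{k+1},
\]
where $P$ is chosen to match the constant in \cref{ass:step-size}. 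The rate $V_k = \bigoh(1/\step_{k+1})$ then follows by induction: assuming $V_k \leq K/\step_k$ for a sufficiently large constant $K$, one substitutes into the recursion and invokes \eqref{eq:strange-step-size} (namely $\step_{k+1}/\step_k + P\step_k\step_{k+1} < 1 - \step_k$) to conclude $V_{k+1} \leq K/\step_{k+1}$. The remaining claims are then immediate: Lipschitzness gives $\E\nrm{v(\dpt_k)}^2 = \bigoh(V_k + 1) = \bigoh(1/\step_{k+1})$, and plugging this back into \eqref{eq:bias-assumption} yields $\E\nrm{\bias_{k+1}}^2 = \bigoh(\step_{k+1})$.

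The hard part will be matching the recursion to the step-size condition. The growth condition $\inner{x, v(x)} \leq C_v(1 + \nrm{x})$ in \cref{ass:f} is only one-sided and therefore not a dissipativity condition; without a contractive drift the recursion will inevitably carry a $P\step_{k+1}V_k$ growth term, and a uniform-in-$k$ moment bound is out of reach. The key insight will be that \eqref{eq:strange-step-size} is engineered so that, when the Lyapunov is reweighted by $\step_{k+1}$, this growth is exactly absorbed---producing the sharp $\bigoh(1/\step_{k+1})$ rate rather than the $\exp(\bigoh(\tau_k))$ bound that a blind discrete Gr\"onwall would give.
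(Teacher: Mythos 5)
Your overall strategy---a one-step recursion on $V_k=\E\nrm{\dpt_k}^2$, reweighting by the step size, and an induction closed via \eqref{eq:strange-step-size}---is exactly the paper's, and your treatment of the quadratic-variation term, the stochastic cross-terms, and the final two claims is fine. But there is a genuine gap in how you bound the drift cross-term, and it breaks the induction. By estimating $\abs{\inner{\dpt_k,v(\dpt_k)}}\leq L\nrm{\dpt_k}^2+\nrm{v(0)}\nrm{\dpt_k}$ via Cauchy--Schwarz and Lipschitzness, you produce a recursion $V_{k+1}\leq(1+P\step_{k+1})V_k+C\step_{k+1}$ whose multiplicative growth is \emph{first order} in $\step_{k+1}$. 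Condition \eqref{eq:strange-step-size} only provides slack against growth of order $\step_k\step_{k+1}$ (note the term $P\step_k\step_{k+1}$, not $P\step_{k+1}$): after reweighting, closing your induction would require $\step_{k+2}/\step_{k+1}+P\step_{k+2}<1$, which is a strictly stronger demand than the assumption and fails for generic admissible step sizes and constants $P>1$. Equivalently, your recursion on its own only yields $V_k=\bigoh(e^{P\tau_k})$, and no reweighting by a single power of $\step_{k+1}$ can convert that into $\bigoh(1/\step_{k+1})$.

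The missing idea is that the one-sided growth condition $\inner{x,v(x)}\leq C_v(1+\nrm{x})$ in \cref{ass:f} must be applied \emph{directly} to the drift cross-term (this is precisely why the assumption is linear rather than quadratic in $\nrm{x}$): it bounds that term by $\bigoh(\step_{k+1}(\sqrt{V_k}+1))$ rather than $\bigoh(\step_{k+1}V_k)$. Your closing paragraph asserts the opposite---that the recursion ``will inevitably carry a $P\step_{k+1}V_k$ growth term''---but in the paper's proof the only multiplicative growth terms are of order $\step_{k+1}^2 V_k$ (coming from $\step_{k+1}^2\nrm{v(\dpt_k)}^2$ and from the bias bound \eqref{eq:bias-assumption}), which is exactly what $P\step_k\step_{k+1}$ in \eqref{eq:strange-step-size} is calibrated to absorb. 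The resulting recursion has the form $a_{k+1}\leq(1+P\step_{k+1}^2)a_k+Q\step_{k+1}\sqrt{a_k}+R\step_{k+1}$; the $\sqrt{a_k}$ terms cannot be Young-ed back into $\step_{k+1}a_k$ without recreating the same problem, so the paper keeps them and closes the induction (with $h_k=\step_{k+1}^2a_k\leq M\step_{k+1}$) by showing a quadratic inequality in $\sqrt{M}$ has a bounded positive solution, using \eqref{eq:strange-step-size} to control both the leading coefficient and the larger root.
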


Using this lemma and \cref{lem:sup-bound-bias-noise-cumulative}
we can obtain $A_t := \sup_{0 \leq s \leq T} \E\nrm{\cpt_{t+s} - \Xpi{s}{t}}^2
\to 0$ as $t\to\infty$, which shows that the Picard process gets arbitrarily close to the
interpolation as $t\to\infty$.

\para{Step 4: Concluding the Proof}
Let us go back to the decomposition \eqref{eq:main-decomp}. Taking expectation and using
\eqref{eq:bound-gronwall} and Gr\"onwall's lemma, we obtain
$\E\brk{\nrm{\cpt_{t + s} - \Xfl{s}{t}}^2 } \leq 4\,A_t \exp(T^2L^2),$ Thus,
\[
  \lim_{t \to \infty} \sup_{s\in [0, T]} \E\brk*{ \nrm{X_{t + s} - \Xfl{s}{t}}^2 } = 0.
\]
As we coupled $\cpt_{t+s}$ and $\Xfl{s}{t}$ in a specific way (via synchronizing the Brownian
motions), we directly get an upper bound on the Wasserstein distance. \hfill\qed

\section{Last-Iterate Convergence of Sampling Schemes}
\label{sec:stability}
\acused{LRM}
\acused{APT}

In this section we focus on last-iterate convergence of \ac{LRM} schemes in Wasserstein space. We
first explore the interplay between the convergence of \acp{WAPT} and \emph{stability}. We then show
that the existing stability results for simple Euler-Maruyama discretization of the Langevin
diffusion can be extended, with little to no extra assumptions, to the class of \ac{LRM} schemes in
\cref{sec:template}. This in turn readily implies the last-iterate convergence of a wide class of
\ac{LRM} schemes.

\subsection{From \texorpdfstring{\acp{WAPT}}{WAPTs} to Convergence in Wasserstein Distance}
Since convergence of the distribution of $\dpt_k$ to $\pi$ in Wasserstein distance implies
convergence of the second moments of $\dpt_k$ to that of $\pi$
\citep{ambrosio2005gradient}, convergence in the Wasserstein space should at least require:
\begin{align}
\label{eq:stability}
\sup_{k \in \N}\ \E\nrm{x_k}^2 < \infty.
\end{align}
It turns out that, for \acp{WAPT}, the exceedingly weak necessary condition \eqref{eq:stability} is
also \emph{sufficient}:
\begin{theorem}\label{thm:APTtoICT}
  Let $(X_t)$ be a \acl{WAPT} of the Langevin diffusion \eqref{eq:LD} generated by an \ac{LRM} scheme
  $\{x_k\}$ via \eqref{eq:interpolation}. Then $W_{2-\eps}(x_k, \pi) \to 0$ for all $\eps \in (0,1]$ if and only if \eqref{eq:stability} holds.\footnote{In the accepted version of this article, it is mistakenly stated that $W_2(x_k, \pi) \to 0$. The proof is adapted to the correct version of the theorem.}
\end{theorem}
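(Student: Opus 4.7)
My plan is to prove the sufficient direction by invoking the Benaïm-style dynamical systems machinery: a bounded \ac{WAPT} has its $\omega$-limit set contained in the Birkhoff center of the underlying semiflow, and under the paper's regularity assumptions on $f$ this center reduces to $\crl{\pi}$. The necessary direction, which is the easier one, follows from moment considerations given that $\pi$ itself has finite second moment.

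For sufficiency, I would first use stability \eqref{eq:stability} together with the interpolation formula \eqref{eq:interpolation}, \cref{ass:f}, and the bounded step sizes of \cref{ass:step-size} to transfer the uniform second-moment bound to $\sup_{t \geq 0} \E \nrm{\cpt_t}^2 < \infty$. Setting $\rho_t \coloneqq \law{\cpt_t}$, this makes $\crl{\nrm{\cpt_t}^{2-\eps}}_{t \geq 0}$ uniformly integrable for every $\eps \in (0,1]$, so the family $\crl{\rho_t}_{t \geq 0}$ is relatively compact in the $W_{2-\eps}$ topology. The $\omega$-limit set $\mathcal{L} \coloneqq \bigcap_{s \geq 0} \overline{\crl{\rho_t : t \geq s}}$ is then nonempty, compact, and connected (the last by $\Wtwo$-continuity of $t \mapsto \rho_t$).

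The heart of the argument is to establish $P_T \mathcal{L} \subseteq \mathcal{L}$ for every $T > 0$, where $(P_T)$ is the Langevin Markov semigroup. Given a cluster point $\mu = \lim_n \rho_{t_n}$ in $W_{2-\eps}$, optimal coupling plus Gr\"onwall applied under \cref{ass:f} yields the contraction $W_{2-\eps}(P_T \rho_{t_n}, P_T \mu) \leq e^{LT} W_{2-\eps}(\rho_{t_n}, \mu) \to 0$; combining with the \ac{WAPT} bound $\Wtwo(\rho_{t_n + T}, P_T \rho_{t_n}) \to 0$ specialized at $s = T$ and the triangle inequality, one obtains $\rho_{t_n + T} \to P_T \mu$ in $W_{2-\eps}$, hence $P_T \mu \in \mathcal{L}$. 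Invoking the standard ergodicity of Langevin diffusion under the paper's regularity assumptions on $f$ (which ensures $\pi$ is the unique invariant measure and $P_T \nu \to \pi$ in $W_{2-\eps}$ for every $\nu$ with finite second moment), $\mathcal{L}$ must collapse to $\crl{\pi}$. Applied to $\dpt_k = \cpt_{\tau_k}$, this yields the desired $W_{2-\eps}$ convergence for every $\eps \in (0,1]$.

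The main obstacle I anticipate lies in the invariance step: the continuous time $t_n + T$ generally does not coincide with any lattice time $\tau_k$, so one must show that the interpolation moves by $o(1)$ in $\Wtwo$ across a single step window $[\tau_k, \tau_{k+1}]$. This reduces to controlling the drift $v(\dpt_k)$ and the Brownian increment in $L^2$ over time $\step_{k+1} \to 0$, for which \cref{lem:gradient-bound-l2} and the Robbins--Monro decay in \cref{ass:step-size} suffice.
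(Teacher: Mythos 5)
Your first half tracks the paper's own route: uniform second moments give tightness and $(2-\eps)$-uniform integrability, hence pre-compactness of $\crl{\law{X_t}}$ in the $W_{2-\eps}$ topology, and the synchronous-coupling/Gr\"onwall estimate combined with the \ac{WAPT} property at $s=T$ relates $\rho_{t+T}$ to $P_T\rho_t$; the paper packages exactly this into an appeal to Bena\"im's limit-set theorem. The divergence --- and the gap --- is in how you collapse the limit set $\mathcal{L}$ to $\crl{\pi}$. You establish only \emph{forward} invariance $P_T\mathcal{L}\subseteq\mathcal{L}$ and then invoke ``standard ergodicity'' $P_T\nu\to\pi$ in $W_{2-\eps}$. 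Two problems. First, forward invariance plus pointwise attraction to $\pi$ does not force $\mathcal{L}=\crl{\pi}$: for the flow $\dot x=-x$ on $\R$, the set $[0,1]$ is compact, connected, forward invariant, and every point converges to the globally attracting equilibrium $0$, yet it is not $\crl{0}$. What rules such sets out is precisely the stronger conclusion of the limit-set theorem, namely that $\mathcal{L}$ is \emph{internally chain transitive} (in particular invariant in both time directions and containing no proper attractor); you would need either the reverse inclusion $\mathcal{L}\subseteq P_T\mathcal{L}$ together with attraction that is uniform on the compact set $\mathcal{L}$, or the ICT property itself. Second, the ergodicity you invoke is not ``standard'' under \cref{ass:f}: Lipschitzness of $v$ and $\inner{x, v(x)}\leq C_v(1+\nrm{x})$ give neither uniform-in-$T$ moment bounds for the flow nor Wasserstein convergence of $P_T\nu$ to $\pi$, and assuming such convergence essentially presupposes the continuous-time analogue of the statement being proved. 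The paper instead identifies $\crl{\pi}$ as the unique ICT set via a Lyapunov argument: $V=\KL{\cdot}{\pi}$ decreases strictly along the flow (its time derivative is minus the relative Fisher information, positive away from $\pi$), and \citep[Prop.~6.4]{benaim1999dynamics} then pins the ICT set to $\crl{\pi}$. Your invariance computation is a correct ingredient, but it reproves only part of what the limit-set theorem delivers.

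A smaller point: your ``necessary'' direction does not survive the correction from $W_2$ to $W_{2-\eps}$. Convergence in $W_{2-\eps}$ controls only moments of order $2-\eps$; taking mass $p_k=k^{-2}\log k$ at radius $R_k=k$ gives $p_kR_k^{2-\eps}\to 0$ for every $\eps\in(0,1]$ while $p_kR_k^{2}\to\infty$, so $W_{2-\eps}(x_k,\pi)\to 0$ for all $\eps$ is compatible with $\E\nrm{x_k}^2\to\infty$. Finiteness of the second moment of $\pi$ alone therefore does not yield \eqref{eq:stability}; this direction genuinely requires $W_2$ convergence, which is the situation the paper's preamble addresses.
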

\begin{proof}

The proof relies on the structure of compact sets in the Wasserstein space and limit-set theorems
for dynamical systems \cite{benaim1996asymptotic}.
Specifically,

Specifically, the bounded second moments imply that the sequence $\crl{\law{x_k}}_k$ is tight, and \citep[Eq.~(5.1.20)]{ambrosio2005gradient} implies that it is $(2-\eps)$-uniformly integrable. Thus, by \citep[Prop.~7.1.5]{ambrosio2005gradient}, the mentined sequence is pre-compact in the $(2-\eps)$-Wasserstein topology.
Moreover, \cref{ass:f} implies that the
\meandynamics is globally integrable. Thus, $(\law{X_t})_{t\geq 0}$ is a pre-compact \ac{WAPT} of a
globally integrable flow, and we can apply the limit-set theorem for metric spaces \citep[Theorem
0.1]{benaim1996asymptotic} to conclude that the limit-set of $(\mathrm{law}(X_t))_t$ is an
\emph{internally chain transitive (ICT) set} in $\mathbb{W}_{2-\eps}$.

Next, we show that for the case of the \meandynamics, the only ICT set is $\{\pi\}$,
implying the desired convergence of our theorem. To see this,
define $V(\cdot) = D_{\mathrm{KL}}(\cdot \mid \pi)$. It can be observed that $V$ is a Lyapunov function
for \eqref{eq:LD}, whose value is strictly decreasing along the flow (as the time derivative
of $V$ along the flow is negative of the relative Fisher information, which is strictly positive for
all measures other than $\pi$). Thus, all requirements of \citep[Prop. 6.4]{benaim1999dynamics} are
satisfied, showing that the only point in the ICT set is $\pi$. This also shows the uniqueness of
the stationary distribution of $\eqref{eq:LD}$.
\end{proof}

\begin{remark*}
  From the proof of \cref{thm:APT}, we observe that the supremum of the Wasserstein distance between
  $(X_\cdot)_{[t, t+T]}$ and $(\Xfl{\cdot}{t})_{[0, T]}$ typically scales exponentially with $T$,
  which is common for weak approximation error in the literature, see \citep{milstein2004stochastic}.
  Despite the exponential dependence on $T$, the convergence of the last iterate is assured by
  \cref{thm:APTtoICT} without a need of a uniform control in $T$. This is primarily attributed to the
  adoption of a dynamical system viewpoint and the application of corresponding tools, effectively
  harnessing the paradigm established by Bena\"im and Hirsch.

\end{remark*}

\crefrange{thm:APT}{thm:APTtoICT} in tandem thus show that, as long as an \ac{LRM} scheme satisfies
\crefrange{ass:f}{ass:noise-bias} and the moment condition \eqref{eq:stability}, the desirable
last-iterate convergence in $\Wsp$ is immediately attained. Therefore, in the rest of this section,
we turn our focus to establishing \eqref{eq:stability} for \ac{LRM} schemes.

\subsection{Bounded Moments of \texorpdfstring{\ac{LRM}}{LRM} Schemes}

There is a long history of study on conditions that ensure \eqref{eq:stability} for iterative
algorithms, which has culminated in the so-called \emph{dissipativity} properties. We consider two
such examples below.

\begin{assumption}[Dissipativity]
\label{asm:dissipativity}
There exist constants $\alpha >0$ and $\beta\geq 0$ such that
  \[
    \inner{x, v(x)} \leq -\alpha \nrm{x}^{2} + \beta,\quad \forall x\in\R^d.
  \]
\end{assumption}
Under \cref{asm:dissipativity}, it is classical that \eqref{eq:stability} holds for the simple
Euler-Maruyama discretization of \eqref{eq:LD} with deterministic or stochastic gradient oracles
\citep{hale1988asymptotic, meyn1993stability, roberts1996exponential, lamberton2002recursive,
lemaire2005estimation, teh2016consistency, raginsky2017non}. These studies, however, cannot handle
\emph{non-zero bias}, which, as seen in \crefrange{ex:RMM}{ex:SRK-LD}, is crucial for incorporating
more advanced sampling schemes.

To this end, our next result shows that for a wide class of \ac{LRM} schemes,
the stability \eqref{eq:stability} essentially comes for free under \cref{asm:dissipativity}. The
proof is provided in \cref{app:stability}.
\begin{theorem}
\label{thm:LRMstability}
  Let $v$ be a vector field satisfying \cref{asm:dissipativity,ass:f} and $\sigma$ be a diffusion
  coefficient satisfying \cref{ass:f}, and let $\{x_k\}$ be an \ac{LRM} scheme. Assume that
  $\lim_{k\to\infty} \step_k = 0$, $\sup_k \E\nrm{\noise_k}^2 < \infty$, and the bias satisfies
  \eqref{eq:bias-assumption}. Then, the stability condition \eqref{eq:stability} holds for
  $\{x_k\}$.
\end{theorem}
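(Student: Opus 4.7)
The plan is a Lyapunov-type stability argument in the style of classical stochastic approximation, with $V(x) = \nrm{x}^2$, adapted so as to tolerate the state-dependent bias permitted by \eqref{eq:bias-assumption}. The goal is a one-step inequality of the form
\[
  \E[\nrm{x_{k+1}}^2 \mid \Fil_{\tau_k}] \;\le\; \prn{1 - c\,\step_{k+1}}\nrm{x_k}^2 + C\,\step_{k+1}
\]
valid for all $k$ past some threshold $k_0$; together with the trivial $L^2$-boundedness of the first $k_0$ iterates, this at once yields $\sup_k \E\nrm{x_k}^2 \le \max\{\E\nrm{x_{k_0}}^2,\, C/c\} < \infty$ by an elementary induction.

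To produce this inequality, I would expand $\nrm{x_{k+1}}^2 = \nrm{x_k}^2 + 2\inner{x_k, \Delta_{k+1}} + \nrm{\Delta_{k+1}}^2$ with $\Delta_{k+1} \coloneqq x_{k+1} - x_k$ and take $\Fil_{\tau_k}$-conditional expectations. Three contributions must be controlled: (i) the deterministic drift $-2\step_{k+1}\inner{x_k, v(x_k)}$, which \cref{asm:dissipativity} converts into the contractive term of order $-\alpha\step_{k+1}\nrm{x_k}^2 + \step_{k+1}\beta$; (ii) the bias cross term $-2\step_{k+1}\inner{x_k, \E[\bias_{k+1}\mid\Fil_{\tau_k}]}$; and (iii) the squared-increment term $\E[\nrm{\Delta_{k+1}}^2 \mid \Fil_{\tau_k}]$. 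The centered quantities $\noise_{k+1}$ and $\stdgauss_{k+1}$ drop out of the linear part of the expansion thanks to the martingale-difference assumption on $\noise$ and the independence of $\stdgauss_{k+1}$ from $\Fil_{\tau_k}$.

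The main obstacle is that the permitted bias can itself grow like $\nrm{v(x_k)} = \bigoh(\nrm{x_k})$, by Lipschitzness of $v$, so a naive bound on (ii) would reintroduce an $\bigoh(\step_{k+1}\nrm{x_k}^2)$ term competing head-on with the dissipative drift. The saving grace is the extra factor of $\step_{k+1}$ built into \eqref{eq:bias-assumption}: conditional Jensen plus Cauchy--Schwarz yield
\[
  \step_{k+1}\,\abs{\inner{x_k, \E[\bias_{k+1}\mid\Fil_{\tau_k}]}} \;\le\; \step_{k+1}\nrm{x_k}\cdot \bigoh\!\prn{\step_{k+1}\nrm{v(x_k)} + \sqrt{\step_{k+1}}},
\]
which, after substituting $\nrm{v(x_k)} \le L\nrm{x_k} + \nrm{v(0)}$ and applying Young's inequality, is of order $\bigoh(\step_{k+1}^2\nrm{x_k}^2 + \step_{k+1})$. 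The same ingredients---\eqref{eq:bias-assumption}, the bounded second moment of $\noise_{k+1}$, and the Hilbert--Schmidt boundedness of $\sigma$ from \cref{ass:f}---control (iii) by $\bigoh(\step_{k+1}^2\nrm{x_k}^2 + \step_{k+1})$, with any residual $\pert_{k+1}$--$\stdgauss_{k+1}$ cross term absorbed by one further Cauchy--Schwarz.

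Assembling (i)--(iii) yields a bound of the form
\[
  \E[\nrm{x_{k+1}}^2 \mid \Fil_{\tau_k}] \;\le\; \prn{1 - 2\alpha\step_{k+1} + C_1\step_{k+1}^2}\nrm{x_k}^2 + C_2\step_{k+1}
\]
for absolute constants $C_1, C_2 > 0$. Since $\step_k \to 0$, for all $k$ beyond some $k_0$ the coefficient on $\nrm{x_k}^2$ is at most $1 - \alpha\step_{k+1} \in (0,1)$, so the target contractive recursion takes effect. Taking total expectation and iterating (or, for a pathwise statement, invoking Robbins--Siegmund) then closes the argument.
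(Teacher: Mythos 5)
Your proposal is correct and follows essentially the same route as the paper's proof: expand $\nrm{x_{k+1}}^2$, take conditional expectations, use dissipativity for the drift, control the bias cross term and the squared increment via \eqref{eq:bias-assumption}, Cauchy--Schwarz, and the bounded noise/diffusion moments, and close with an induction on the resulting contractive recursion once $\step_k$ is small. The only cosmetic difference is that your Young-inequality treatment of the $\step_{k+1}^{3/2}\nrm{x_k}$ terms yields an $\bigoh(\step_{k+1}^2)$ perturbation of the contraction coefficient where the paper tolerates an $\bigoh(\step_{k+1}^{3/2})$ one; both are harmless since $\step_k \to 0$.
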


A weaker notion of dissipativity that has been studied in the literature is:
\begin{assumption}[Weak dissipativity]
\label{asm:weakdissipativity}
There exist constants $\alpha >0$, $\kappa \in (0,1]$, and $\beta\geq 0$ such that
  \[
    \inner{x, v(x)} \leq -\alpha \nrm{x}^{1+\kappa} + \beta,\quad \forall x\in\R^d.
  \]
\end{assumption}
When $\kappa=1$, \cref{asm:weakdissipativity} is simply \cref{asm:dissipativity}. As opposed to
\cref{asm:dissipativity}, which requires \emph{quadratic growth} of $f$ outside a
compact set (when $v = -\grad f$), \cref{asm:weakdissipativity} only entails \emph{superlinear
growth} and therefore is considerably weaker.

\newcommand{\Ekc}[1][k+1]{ \exof{\nrm{x_{#1}}^2 \given \mathcal{F}_k } }
\newcommand{\Ek}[1][k]{ \E\nrm{x_{#1}}^2 }

For Euler-Maruyama discretization of \eqref{eq:LD} with deterministic gradients,
\cite{durmus2017nonasymptotic} prove that \cref{asm:weakdissipativity} is sufficient to guarantee
bounded moments of the iterates. As for a generic \ac{LRM} scheme, we consider the following general
condition on the bias terms, which will suffice to cover all our examples in \cref{sec:template}:
For some constant $c$,
\begin{equation}
\label{eq:bias-condition}
  \begin{aligned}
    \nrm{\bias_{k+1}}^2 \leq c \big(\step^2_{k+1} \nrm{v(x_k)}^2 +
  \step^2_{k+1}\nrm{\noise'_{k+1}}^2
    +\,\step_{k+1} \nrm{\xi_{k+1}'}^2 + \step_{k+1} \nrm{\xi_{k+1}}^2 \big),
  \end{aligned}
\end{equation}
where $\noise_{k+1}'$ is an extra noise term, and $\xi'_{k+1}$ is a standard Gaussian independent of
the noises and $\xi_{k}$.
The price to pay with the weaker \cref{asm:weakdissipativity}, however, is that we need to assume
sub-Gaussianity of the noise. For a proof, see \cref{app:stability}.

\begin{theorem}
\label{thm:LRMstability2}
  Let $\pi \propto e^{-f}$ be the target distribution, where $v = -\grad f$ satisfies
  \cref{asm:weakdissipativity,ass:f}, and let $\{x_k\}$ be an \ac{LRM} scheme. Assume
  that $\lim_{n\to\infty} \step_k = 0$, the noises $\noise_k$ and $\noise'_k$ are sub-Gaussian, and
  the bias term of $\{x_k\}$ satisfies \eqref{eq:bias-condition}. Then, \eqref{eq:stability} holds
  for $\{x_k\}$ in when
  \begin{enumerate*}[label={\emph{(\roman*)}}]
    \item $\sigma\equiv 1$, or
    \item $f$ is Lipschitz and the \ac{LRM} follows the Mirror Langevin algorithm
      (\cref{ex:Mirror-LD}).
  \end{enumerate*}
\end{theorem}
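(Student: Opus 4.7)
The plan is to establish a Foster--Lyapunov-type drift inequality for $V(x) = \|x\|^2$ and iterate it. Starting from the \ac{LRM} recursion and expanding $\|x_{k+1}\|^2$, I would take conditional expectation with respect to $\mathcal{F}_{\tau_k}$: the linear contributions from $\xi_{k+1}$ and $U_{k+1}$ vanish by independence and the martingale property, and the diffusion contribution reduces to $2\gamma_{k+1}\|\sigma(x_k)\|_{\mathrm{HS}}^2$, uniformly bounded by \cref{ass:f}. What remains is
\[
\mathbb{E}[\|x_{k+1}\|^2 \mid \mathcal{F}_{\tau_k}] \le \|x_k\|^2 - 2\gamma_{k+1}\langle x_k, v(x_k)\rangle - 2\gamma_{k+1}\langle x_k, \mathbb{E}[b_{k+1}\mid \mathcal{F}_{\tau_k}]\rangle + \gamma_{k+1}^2\, \mathbb{E}[\|v(x_k)+Z_{k+1}\|^2\mid \mathcal{F}_{\tau_k}] + C\gamma_{k+1}.
\]
Applying \cref{asm:weakdissipativity} to the first drift converts it into $-2\alpha\gamma_{k+1}\|x_k\|^{1+\kappa} + 2\beta\gamma_{k+1}$, a subquadratic but superlinear restoring force.

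Next, I would absorb all remaining terms into either the dissipativity drift or summable residuals. Using the bias bound \eqref{eq:bias-condition} together with sub-Gaussianity of $U_k$ and $U_k'$ gives $\mathbb{E}[\|b_{k+1}\|^2 \mid \mathcal{F}_{\tau_k}] \le c\gamma_{k+1}^2 \|v(x_k)\|^2 + O(\gamma_{k+1})$. By Cauchy--Schwarz, the bias cross term is then of order $\gamma_{k+1}^2\|x_k\|\|v(x_k)\| + \gamma_{k+1}^{3/2}\|x_k\|$. Each factor of $\|x_k\|$ can be split via Young's inequality with conjugate exponents $(1+\kappa,\ (1+\kappa)/\kappa)$ so that the $\|x_k\|^{1+\kappa}$ piece is absorbed into $\varepsilon \gamma_{k+1}\|x_k\|^{1+\kappa}$ with $\varepsilon < \alpha$, while the constant remainder carries a power of $\gamma_{k+1}$ strictly greater than $1$ and is therefore summable under the Robbins--Monro condition.

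For case (i) where $\sigma \equiv 1$, the only remaining obstruction is $\gamma_{k+1}^2 \|v(x_k)\|^2 \le 2L^2\gamma_{k+1}^2 \|x_k\|^2 + O(\gamma_{k+1}^2)$, which contributes a multiplicative factor $(1 + C\gamma_{k+1}^2)$ whose infinite product is finite since $\sum_k \gamma_k^2 < \infty$. For case (ii), Lipschitzness of $f$ makes $\|v(x_k)\|$ uniformly bounded, killing this term outright; the Mirror Langevin diffusion $\sigma = (\nabla^2\phi^*)^{-1/2}$ is bounded in Hilbert--Schmidt norm by \cref{ass:f} and causes no further complication. Combining everything yields a drift inequality of the form
\[
\mathbb{E}[\|x_{k+1}\|^2 \mid \mathcal{F}_{\tau_k}] \le (1 + C\gamma_{k+1}^2)\|x_k\|^2 - \alpha\gamma_{k+1}\|x_k\|^{1+\kappa} + O(\gamma_{k+1}),
\]
and $\sup_k \mathbb{E}\|x_k\|^2 < \infty$ then follows from a Robbins--Siegmund / Foster--Lyapunov argument: the non-summable $O(\gamma_{k+1})$ is absorbed by the coercive term $\alpha\gamma_{k+1}\|x_k\|^{1+\kappa}$ whenever $\|x_k\|$ exceeds a fixed threshold, while below that threshold $\mathbb{E}\|x_k\|^2$ is trivially bounded.

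The main obstacle is the weakness of the dissipativity exponent when $\kappa < 1$: the restoring term $\gamma_{k+1}\|x_k\|^{1+\kappa}$ grows strictly slower in $\|x_k\|$ than the second-order squared drift $\gamma_{k+1}^2\|x_k\|^2$, so on excursions with $\|x_k\| \gtrsim \gamma_{k+1}^{-1/(1-\kappa)}$ the dissipativity cannot cancel the quadratic error pointwise. This is exactly why the hypotheses differentiate the two regimes: in (i), the Robbins--Monro summability $\sum_k \gamma_k^2 < \infty$ allows the bad term to enter only as a summable multiplicative correction, while in (ii), Lipschitzness of $f$ eliminates the bad term at the source. Sub-Gaussianity of the noise plays a supporting role throughout by ensuring that all higher-moment factors arising from \eqref{eq:bias-condition} remain uniformly bounded.
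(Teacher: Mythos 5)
There is a genuine gap in your argument, and it sits exactly at the point you flag as "the main obstacle." Your proposed final drift inequality
\begin{equation*}
  \E\brk*{\nrm{x_{k+1}}^2 \given \Fil_{\tau_k}} \leq (1+C\step_{k+1}^2)\nrm{x_k}^2 - \alpha\step_{k+1}\nrm{x_k}^{1+\kappa} + \bigoh(\step_{k+1})
\end{equation*}
does \emph{not} imply $\sup_k \E\nrm{x_k}^2 < \infty$ when $\kappa<1$. The Robbins--Siegmund/threshold step you invoke requires the negative drift to dominate the non-summable forcing term $\bigoh(\step_{k+1})$ \emph{in expectation} whenever $\E\nrm{x_k}^2$ is large. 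But after taking expectations the negative term is $-\alpha\step_{k+1}\E\nrm{x_k}^{1+\kappa}$, a strictly lower moment than the one you are trying to control: by Jensen, $\E\nrm{x_k}^{1+\kappa} \leq (\E\nrm{x_k}^2)^{(1+\kappa)/2}$, and one can have $\E\nrm{x_k}^2 \to \infty$ while $\E\nrm{x_k}^{1+\kappa}$ stays bounded or even vanishes (put mass $p\to 0$ at radius $\sqrt{M/p}$). Your "exceeds a fixed threshold / below the threshold" dichotomy is a pointwise argument applied to a recursion that only holds for the moments; it is valid for $\kappa=1$, where the drift $-\alpha\step\nrm{x_k}^2$ is homogeneous with the Lyapunov function (this is essentially the paper's proof of \cref{thm:LRMstability}), but it breaks for every $\kappa<1$. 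Discarding the negative term leaves $a_{k+1}\leq(1+C\step_{k+1}^2)a_k+C'\step_{k+1}$ with $\sum_k\step_k=\infty$, which gives no uniform bound. A symptom of the gap is that sub-Gaussianity of the noise is never actually used in your argument (bounded second moments would suffice for every step you take), whereas it is an essential hypothesis of the theorem.

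The paper's proof avoids this mismatch by abandoning $\nrm{x}^2$ entirely and using the exponential Lyapunov function $e^{f/2}$, following \citet{durmus2017nonasymptotic}: under \cref{asm:weakdissipativity} with $v=-\grad f$, $e^{f/2}$ grows super-polynomially and therefore dominates the second moment. The key step is \cref{lem:helper}, an explicit Gaussian integral bound giving $e^{-f(x_k)/2}\,\E\, e^{f(x_{k+1})/2} \leq (1-\step_{k+1}L')^{-(d+1)/2} e^{-\step_{k+1}\nrm{\grad f(x_k)}^2/8}$; the noise and bias enter inside the exponential, which is precisely where sub-Gaussianity of $\noise_k,\noise'_k$ (and the specific structure \eqref{eq:bias-condition}) is needed to control terms like $\exp(\step_{k+1}^{\delta}\nrm{\noise_{k+1}}^2)$. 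The Mirror Langevin case is handled by a change of variables in the same Gaussian integral, not by the boundedness of $\nrm{v}$ as you suggest. To repair your approach you would need to replace $\nrm{x}^2$ by a Lyapunov function whose decrement under the weak drift still controls the second moment---which is exactly what $e^{f/2}$ accomplishes.
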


\subsection{Examples of Convergent \texorpdfstring{\ac{LRM}}{LRM} Schemes}
\label{sec:application}

We now illustrate the use of \cref{thm:APT,thm:APTtoICT,thm:LRMstability,thm:LRMstability2} on our examples in
\cref{sec:template}.

\begin{proposition}\label{prop:example-proofs}
  Under \cref{ass:f} and noise with uniformly bounded second moments, the following holds for
  \crefrange{ex:RMM}{ex:PLA}:
  \begin{enumerate*}[(i)]
    \item The bias has the form \eqref{eq:bias-condition} and satisfies \eqref{eq:bias-assumption},
    \item As a result, under \cref{ass:noise-bias,ass:step-size},  \crefrange{ex:RMM}{ex:PLA}
      produce iterates that generate a \ac{WAPT} of \eqref{eq:sde}.
    \item Under the additional conditions of \cref{thm:LRMstability} or \cref{thm:LRMstability2},
      \crefrange{ex:RMM}{ex:PLA} enjoy last-iterate convergence to the target distribution in
      Wasserstein distance.
  \end{enumerate*}
\end{proposition}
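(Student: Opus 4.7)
The plan is to reduce everything to part (i): once \eqref{eq:bias-assumption} is verified for a given example, \cref{ass:noise-bias} holds (the noise hypothesis being given), so \cref{thm:APT} immediately yields the \ac{WAPT} claim in (ii); then \cref{thm:APTtoICT} reduces (iii) to the stability bound \eqref{eq:stability}, which \cref{thm:LRMstability} or \cref{thm:LRMstability2} supplies under their respective hypotheses, provided the bias has the form \eqref{eq:bias-condition} and satisfies \eqref{eq:bias-assumption}. So the real work is checking these two templates example by example.

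For \cref{ex:RMM,ex:ORMM} the bias has the form $\bias_{k+1} = \grad f(\xmid{k}) - \grad f(\dpt_k)$, so $L$-Lipschitzness of $v = -\grad f$ from \cref{ass:f} gives $\nrm{\bias_{k+1}}^2 \leq L^2 \nrm{\xmid{k} - \dpt_k}^2$. Expanding $\xmid{k} - \dpt_k$ exposes a term in $\step_{k+1}\alpha_{k+1}\stochgrad f(\cdot)$ and a Gaussian term in $\sqrt{\step_{k+1}\alpha_{k+1}}\,\stdgauss'_{k+1}$; writing $\stochgrad f = \grad f + \noise'_{k+1}$, applying $(a+b+c)^2 \leq 3(a^2+b^2+c^2)$, and using $\alpha_{k+1} \in [0,1]$ yields \eqref{eq:bias-condition} after absorbing constants. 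Taking $\E[\cdot \mid \Fil_{\tau_k}]$, using $\E\nrm{\stdgauss'_{k+1}}^2 = d$ and the uniform bound on the noise variance, then produces \eqref{eq:bias-assumption}. \cref{ex:SRK-LD} is handled by rewriting $\bias_{k+1} = \tfrac{1}{2}\sum_{i=1}^{2}\prn{\grad f(h_i) - \grad f(\dpt_k)}$, bounding each summand by $L\nrm{h_i - \dpt_k}$, and running the same expand-and-split argument on $h_1,h_2$. For \cref{ex:Mirror-LD}, the \ac{LRM} identification $v = -\grad f \circ \grad \phi^*$, $\sigma = (\grad^2\phi^*)^{-1/2}$ already absorbs the geometry, so $\bias_{k+1}$ is identically zero (or purely stochastic-gradient noise in the stochastic version), trivially verifying both templates. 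The proximal Langevin scheme (see \cref{app:examples}) is treated as in \cref{ex:RMM}, using non-expansiveness of the proximal operator to control the analogous displacement by $\step_{k+1}\nrm{\grad f(\dpt_k)}$ plus a Gaussian increment.

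The main obstacle I expect is not any individual estimate but the bookkeeping needed to route every quadratic-in-$\step$ coefficient onto $\nrm{v(\dpt_k)}^2$ and every linear-in-$\step$ coefficient onto an independent noise or Gaussian term, exactly matching the slots in \eqref{eq:bias-condition}. The \cref{ex:SRK-LD} case is the most delicate, since $h_2$ itself contains a stochastic gradient $\stochgrad f(\dpt_k)$ that must be decomposed into the deterministic signal $\grad f(\dpt_k) = -v(\dpt_k)$ and the noise $\noise'_{k+1}$, each landing in its correct place. Once this accounting is done uniformly across the examples, parts (ii) and (iii) follow without further work from the reduction outlined in the first paragraph.
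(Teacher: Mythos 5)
Your overall reduction is exactly the paper's: part (i) is the only real work, (ii) follows by feeding the verified bias bound into \cref{thm:APT}, and (iii) follows from \cref{thm:APTtoICT} once \cref{thm:LRMstability} or \cref{thm:LRMstability2} supplies \eqref{eq:stability}. Your treatment of \cref{ex:RMM} and \cref{ex:SRK-LD} (bound $\nrm{\bias_{k+1}}$ by $L$ times the displacement, expand the displacement into signal, noise, and Gaussian pieces, take conditional expectations) is also the argument the paper gives.

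There are two places where your plan as stated would not go through. First, for the proximal scheme \eqref{eq:pla} you invoke ``non-expansiveness of the proximal operator.'' That property holds only for convex $f$, and the entire point of the paper is that $f$ is non-convex; you cannot use it here. The paper instead exploits the implicit relation $x_{k+1}-x_k = -\step_{k+1}\grad f(x_{k+1}) + \sqrt{2\step_{k+1}}\,\xi_{k+1}$ together with $\nrm{\grad f(x_{k+1})}^2 \leq 2\nrm{\grad f(x_{k+1})-\grad f(x_k)}^2 + 2\nrm{\grad f(x_k)}^2 = 2\nrm{\bias_{k+1}}^2 + 2\nrm{\grad f(x_k)}^2$, which puts a multiple of $\exof{\nrm{\bias_{k+1}}^2\given\Fil_{\tau_k}}$ on the right-hand side; for $\step_{k+1}$ small enough that $2L^2\step_{k+1}^2<\tfrac12$ this term is absorbed into the left-hand side, yielding \eqref{eq:bias-assumption} with no convexity. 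Second, for \cref{ex:ORMM} your ``expand $\xmid{k}-\dpt_k$'' step produces $\nrm{\grad f(x_{k-1/2})}^2$, not $\nrm{\grad f(x_k)}^2$, because the recycled gradient is evaluated at the previous midpoint; the bound \eqref{eq:bias-assumption} is stated relative to $v(x_k)$, so you still need the same splitting-and-absorption step ($\nrm{\grad f(x_{k-1/2})}^2 \leq 2\nrm{\grad f(x_{k-1/2})-\grad f(x_k)}^2 + 2\nrm{\grad f(x_k)}^2$ plus smallness of $\step_{k+1}$) that handles \eqref{eq:pla}. Both fixes are local, but as written the PLA case rests on a hypothesis the paper does not (and cannot) make.
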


\subsection{Comparison to Existing Work}
\label{sec:comparison}

\begin{table*}[t]
\centering
{\scshape
\begin{footnotesize}
\begin{tabular}{ccccc}
\toprule[1.2pt]
  \makecell{} & Noise & Bias  & Last-Iterate \\
\midrule
  \makecell{\citet{lamberton2002recursive,lemaire2005estimation}}   & \xmark & \xmark &
  \xmark \\
\midrule
  \makecell{\citet{teh2016consistency}}   & \cmark & \xmark   & \xmark \\
\midrule
  \makecell{\citet{benaim2017ergodicity}}   & \xmark & \xmark   & \cmark  \\
\midrule
\makecell{\citet{durmus2017nonasymptotic}}   & \xmark & \xmark   & \cmark  \\
\midrule
\makecell{\citet{balasubramanian2022towards}}   & \xmark & \xmark  & \xmark   \\
\midrule
  \makecell{This work}   & \cmark & \cmark  & \cmark  \\
\bottomrule[1.2pt]
\end{tabular}
\end{footnotesize}
}
\caption{%
  Comparison to existing works on convergence of \ac{LRM} schemes. %
  All methods, except for \citep{balasubramanian2022towards}, require bounded second moments of the iterates.
  }
\label[table]{tab:comparison}
\vskip -0.1in
\end{table*}

We now give a more detailed comparison of our results to existing literature; a summary is given in
\cref{tab:comparison}, and additional comparison with prior works can be found in \cref{app:comparison}.

\para{Guarantees for \ac{LRM} Schemes}
\citet{lamberton2002recursive} and \citet{lemaire2005estimation} study the simple Euler-Maruyama
discretization of \eqref{eq:LD} with deterministic gradients (\ie $\noise_k = \bias_k=0$) and
establish the weak convergence of the average iterates under a moment condition that is slightly
weaker than \eqref{eq:stability}.%
\footnote{Although the condition in \citep{lamberton2002recursive,lemaire2005estimation} is stated
in a weaker form than \eqref{eq:stability}, it is typically only verified on a special case that is
equivalent to our \cref{asm:dissipativity}, and thus implies \eqref{eq:stability}. See \eg
\citep[Remark 3]{lamberton2002recursive}.}
Their analysis is further extended by \cite{teh2016consistency} to incorporate
stochastic gradients. Later, the last-iterate convergence of the simple Euler-Maruyama
discretization of \eqref{eq:LD} is studied by \cite{durmus2017nonasymptotic}, who prove the
convergence in the total variation distance under \cref{asm:weakdissipativity}. Another work on a
similar setting as \citep{durmus2017nonasymptotic} is \citep{benaim2017ergodicity}, where the
convergence criterion is given in an integral probability metric (IPM) \citep{muller1997integral} of
the form
$d_\mathcal{B}(\mu, \nu) \coloneqq \sup_{\varphi\in\mathcal{B}} \abs{\E_\mu \varphi -
\E_{\nu} \varphi}$
for a certain class of test functions $\mathcal{B}$ that is known to imply weak convergence, but not
convergence in total variation or Wasserstein distances.

Compared to these results, our guarantees possess the following desirable features:
\begin{itemize}%
\item The convergence is always on the last iterates instead of the average iterates.
\item As we tolerate biased algorithms, the class of \ac{LRM} schemes we consider is significantly more general
      than the ones in existing work.
\end{itemize}

Finally, we note that our results are incomparable to the recent work of
\citet{balasubramanian2022towards}, who derive the same result as in \citep{lamberton2002recursive,
lemaire2005estimation}, \ie average-iterate, weak convergence, deterministic Euler-Maruyama discretization.
A remarkable feature of the analysis in \citep{balasubramanian2022towards} is that it does not
require any bounded moments, and, in particular, their bounds can be applied to target distributions
with unbounded variance. However, the downside of \citep{balasubramanian2022towards} is that, in the
presence of $\noise_k$ and $\bias_k$, their analysis produces a bound that does \emph{not} vanish as
$k\to\infty$; see \citep[Theorem 15]{balasubramanian2022towards}. In contrast, our framework can
tolerate quite general $\noise_k$ and $\bias_k$, gives stronger guarantees ($\Wtwo$ \vs weak
convergence; last-iterate \vs average-iterate).

\para{On Analysis Techniques}
While, to our knowledge, our framework is significantly different from previous works on sampling, we
acknowledge that similar ideas of creating an auxiliary process in-between the iterates and the
continuous-time flow is not entirely new and has been touched upon in the literature, \eg
\cite{chau2021stochastic,bubeck2015sampling}. That being said, our specific approach in building the
Picard process and its development into a wider array of algorithms, \ie \acl{LRM} schemes,
undoubtedly plays a pivotal role in our analysis. Moreover, the integration of the Picard process
with the theory of asymptotic pseudo-trajectories offers dual benefits to our study, and we view
these as our unique contributions to this area of research.

Furthermore, the novel Picard process gives a significant advantage in all of our results. The work
of \cite{benaim2017ergodicity} also hinges on dynamical system theory-related ideas. Yet, missing
the critical step of the Picard process has seemingly resulted in much weaker findings compared to
our work. This observation is not meant as a critique; rather, it merely highlights the potency of
the unique method we have integrated into our study.

\section{Concluding Remarks}
\label{sec:future}
In this paper, we provided a new, unified framework for analyzing a wide range of sampling schemes,
thus laying the theoretical ground for using them in practice, as well as motivating new and more
efficient sampling algorithms that enjoy rigorous guarantees. We built on the ideas from dynamical
system theory, and gave a rather complete picture of the asymptotic behavior of many first-order
sampling algorithms. In short, our results help with the following:
\begin{itemize}
\item \textbf{Validating existing methods:} Methods like mirror Langevin and randomized mid-point
currently lack even asymptotic guarantees in fully non-convex scenarios, such as sampling from
neural network-defined distributions. Our work fills this gap by offering the first rigorous
justification for these schemes, supporting practitioners in utilizing these methods confidently.

\item \textbf{Facilitating new algorithm design:} Our work motivates novel sampling methods through
  a straightforward verification of \crefrange{ass:f}{ass:noise-bias}. An illustrative instance
involves the randomized mid-point method and Runge-Kutta integrators, wherein a substantial 50\%
reduction in computation per iteration can be achieved without compromising convergence by simply
recycling past gradients, shown in \cref{ex:ORMM}.
The balance between the benefits of saving gradient oracles and potential drawbacks remains an open
question, necessitating case-by-case practical evaluation. Nevertheless, our theory provides a
flexible algorithmic design template that extends beyond the current literature's scope.
\end{itemize}

While our \ac{WAPT} result holds under very mild conditions, a severe limitation of our current
framework is that it \emph{only} applies to Langevin-based algorithms, whereas there exist numerous
practical sampling schemes, such as Metropolis-Hastings, that are not immediately linked to
\eqref{eq:LD}. We believe that this restriction arises as an artifact of our analysis, as the
\ac{WAPT} framework can in principle be applied equally well to any continuous-time dynamics.
Lifting such constraint is an interesting future work.

\begin{ack}
This work was supported by the European Research Council (ERC) under the European Union's Horizon
2020 research and innovation program grant agreement No 815943. YPH acknowledges funding through an
ETH Foundations of Data Science (ETH-FDS) postdoctoral fellowship.
\end{ack}

\bibliographystyle{plainnat}
\bibliography{bibtex/biblio}

\appendix

\crefalias{section}{appendix}

\newpage

\section{Further Examples of \ac{LRM} Schemes}
\label{app:examples}
\begin{example}\label{ex:SGLD}
  The classic \emph{Stochastic Gradient Langevin Dynamics} 
  \citep{wellingBayesianLearningStochastic2011} iterates as
  \begin{equation}\label{eq:sgld}\tag{SGLD}
    \dpt_{k+1} = \dpt_k - \step_{k+1} \stochgrad f(\dpt_k) + \sqrt{2\step_{k+1}}\,\stdgauss_{k+1},
  \end{equation}
  where $\stochgrad f$ is the gradient of the negative log-likelihood of a random batch of the data.
  \eqref{eq:sgld} fits the \ac{LRM} template by setting
  $\noise_{k+1} \coloneqq \stochgrad f(x_k) - \grad f(x_k)$, and $\bias_{k+1} \coloneqq 0$.$\hfill\qqed$
\end{example}

\begin{example}\label{ex:PLA}
  The \emph{Proximal Langevin Algorithm}
  \citep{pereyra2016proximal,bernton2018langevin,wibisono2019proximal} is defined via
  \begin{equation}\label{eq:pla}\tag{PLA}
    \dpt_{k+1} = \dpt_k - \step_{k+1} \grad f(\dpt_{k+1}) + \sqrt{2\step_{k+1}}\,\stdgauss_{k+1}.
  \end{equation}
  This algorithm is implicit, and it is assumed that one can solve \eqref{eq:pla} for $\dpt_{k+1}$.
  By setting 
  $\bias_{k+1} \coloneqq \grad f(\dpt_{k+1}) - \grad f(\dpt_k)$ and $\noise_{k+1} \coloneqq 0$, we
  see that this algorithm also follows the \ac{LRM} template.
  $\hfill\qqed$
\end{example}

\section{Additional Related Work}
\label{app:comparison}

Our paper studies the behavior of a wide range of Langevin-based sampling algorithms proposed in the literature in the asymptotic
setting under minimal assumptions. This allows us to give last-iterate guarantees in Wasserstein distance. As stressed in \cref{sec:introduction}, our goal is \emph{not} to provide non-asymptotic rates in this general setting as the problem is
inherently NP-Hard. However, given more assumptions and structures on the potential $f$, there is a
plethora of works which prove convergence rates for the last iterates in Wasserstein distance. In this appendix, we provide additional backgraound for these works and the methods used in the literature.

A powerful framework for quantifying the global discretization error of a numerical algorithm is the
mean-square analysis framework \citep{milstein2004stochastic}. This framework furnishes a general recipe
for controlling short and long-term integration errors. For sampling, this framework has been
applied to prove convergence rates for Langevin Monte-Carlo (the Euler-Maruyama discretization of
\eqref{eq:LD}) in the strongly-convex setting
\citep{liStochasticRungeKuttaAccelerates2020,li2022sqrt}. Similar to our work, the convergence obtained in these works is
last-iterate and in Wasserstein distance. One of the essential ingredients in the latter work is the
contraction property of the SDE, which is ensured by the strong convexity assumption. This, in turn, implies strong
non-asymptotic convergence guarantees.

It is an interesting future direction to study the combination of the Mean-Squared analysis together
with the Picard process and its applicability to more sophisticated algorithms (such as \ac{LRM} schemes with
bias and noise), as well as non-convex potentials.

As explained in \cref{sec:intuition}, one of the main themes in proving error bounds for sampling is the natural
relation between sampling and optimization in the Wasserstein space. This point of view, when applied to strongly-convex potentials, has produced numerous non-asymptotic guarantees; see \citep{dalalyan2017user, chewi2023log} for a recent account and the references therein. Note that strong convexity is crucial for the analysis used in the aforementioned work. Moreover, the error bounds for biased and noisy discretizations do \emph{not} decrease
with the step-size or iteration count; see \citep[Theorem 4, Eqn. (14)]{dalalyan2017user}. This means that while the
bound is non-asymptotic, it does not automatically result in an asymptotic convergence. Finally, we stress that these approaches are orthogonal to our techniques: We view a sampling algorithm as a
(noisy and biased) discretization of a dynamical system (and not necessarily a gradient flow), and
use tools from dynamical system theory to provide asymptotic convergence results.

\section{Proofs for \texorpdfstring{\cref{sec:theory-statement}}{Section 4}}
\label{app:proofs-lemmas}

\subsection{Proof of \texorpdfstring{\cref{thm:APT}}{Theorem 1}}

In this appendix, we bring the detailed proof of \cref{thm:APT}. Recall that we interpolate the
iterates of the \ac{LRM} scheme $\crl{x_k}$ as
\begin{equation}\tag{\ref{eq:interpolation}}
  \cpt_t = \dpt_k + \prn*{t- \tau_k}\crl{v(\dpt_k) + \exof{\pert_{k+1} \given \Fil_{t}}} +
  \sigma(\dpt_k)\,(B_t - B_{\tau_k}).
\end{equation}
Moreover, for a fixed $t > 0$, we considered the Brownian motion $B^{(t)}_s = B_{t+s} - B_t$, and
constructed two important processes: the \meandynamics defined via
\begin{equation}\label{eq:mean-sde}
  \dd\Xfl{s}{t} = v(\Xfl{s}{t})\dd s + \sigma(\Xfl{s}{t})\dd B_s^{(t)},\quad \Xfl{0}{t} = X_t,
\end{equation}
and the Picard process \eqref{eq:picard-sde} constructed as
\begin{equation}\tag{\ref{eq:picard-sde}}
  \Xpi{s}{t} = X_t + \int_0^s v(\cpt_{t+u})\dd u + \int_0^s \sigma(\cpt_{t+u})\dd B_u^{(t)}.
\end{equation}

Let us fix $T > 0$, and for $s \in [0, T]$ decompose the distance between the interpolation and the
\meandynamics as
\begin{equation}\tag{\ref{eq:main-decomp}}
  \tfrac{1}{2}\nrm{\cpt_{t + s} - \Xfl{s}{t}}^2 
  \leq \nrm{\Xpi{s}{t} - \Xfl{s}{t}}^2 + \nrm{\cpt_{t+s} - \Xpi{s}{t}}^2,
\end{equation}
where we have used $\nrm{a + b}^2 \leq 2\nrm{a}^2 + 2\nrm{b}^2$. We now bound each term of this
decomposition. Notice that due to the synchronous coupling of the processes, the Brownian motion cancels
out in the differences. 

The first term controls how close the Picard process is to the \meandynamics, and is bounded in the
following lemma.

\begin{lemma}\label{lem:bound-dist-picard-mean}
  For fixed $t, T > 0$ and $0 \leq s \leq T$, the distance of the Picard process and the
  \meandynamics is bounded as
  \[
    \nrm{\Xpi{s}{t} - \Xfl{s}{t}}^2 \leq 2(T+1)L^2 \int_0^s \nrm{\Xfl{u}{t} - \cpt_{t+u}}^2 \,\dd u.
  \]
\end{lemma}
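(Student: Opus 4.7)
The plan is to start from the explicit expressions for $\Xpi{s}{t}$ and $\Xfl{s}{t}$, which coincide at $s = 0$ and are both driven by the same Brownian motion $B^{(t)}$. Subtracting the defining equations, the initial condition and Brownian increments cancel, leaving
\[
  \Xpi{s}{t} - \Xfl{s}{t} = \int_0^s \bigl(v(\cpt_{t+u}) - v(\Xfl{u}{t})\bigr)\dd u + \int_0^s \bigl(\sigma(\cpt_{t+u}) - \sigma(\Xfl{u}{t})\bigr)\dd B_u^{(t)}.
\]
I would then apply the elementary inequality $\nrm{a+b}^2 \leq 2\nrm{a}^2 + 2\nrm{b}^2$ to split the estimate into a drift contribution and a diffusion contribution. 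Note that, although the lemma is written pathwise, the right use of It\^o isometry forces us to interpret the bound in expectation; since the outer proof of \cref{thm:APT} takes expectation immediately after invoking this lemma (before applying Gr\"onwall), this reading is consistent and I will state the result as an $\E$-bound.

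For the drift term, Cauchy--Schwarz in the time integral gives
\[
  \Bigl\lVert\int_0^s \bigl(v(\cpt_{t+u}) - v(\Xfl{u}{t})\bigr)\dd u\Bigr\rVert^2 \leq s \int_0^s \nrm{v(\cpt_{t+u}) - v(\Xfl{u}{t})}^2\dd u,
\]
and the $L$-Lipschitz assumption on $v$ from \cref{ass:f} turns this into $sL^2\int_0^s \nrm{\cpt_{t+u} - \Xfl{u}{t}}^2\dd u \leq T L^2 \int_0^s \nrm{\cpt_{t+u} - \Xfl{u}{t}}^2\dd u$, using $s \leq T$. For the diffusion term, taking expectation and invoking It\^o isometry on the stochastic integral (noting that the integrand is adapted to the filtration of $B^{(t)}$ through $\cpt$ and $\Xflt{t}$) gives
\[
  \E\,\Bigl\lVert\int_0^s \bigl(\sigma(\cpt_{t+u}) - \sigma(\Xfl{u}{t})\bigr)\dd B_u^{(t)}\Bigr\rVert^2 = \E\int_0^s \nrm{\sigma(\cpt_{t+u}) - \sigma(\Xfl{u}{t})}_{\mathrm{HS}}^2\dd u,
\]
and the $L$-Lipschitz assumption on $\sigma$ (again from \cref{ass:f}) bounds the integrand by $L^2\nrm{\cpt_{t+u} - \Xfl{u}{t}}^2$. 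Adding the two contributions, the factor $2T + 2 = 2(T+1)$ appears and the stated bound follows.

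The only subtlety I anticipate is an accounting one rather than a conceptual obstacle: making sure that the integrand $\sigma(\cpt_{t+u}) - \sigma(\Xfl{u}{t})$ is progressively measurable with respect to the augmented filtration driving $B^{(t)}$, so that It\^o isometry applies as claimed. This is inherited from the construction of the interpolation $\cpt$ in \eqref{eq:interpolation} (which is adapted via the conditional expectation involving $\Fil_t$) and from $\Xflt{t}$ being the strong solution of \eqref{eq:sde} started at $X_t$; no additional assumption beyond \cref{ass:f} is needed.
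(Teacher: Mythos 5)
Your proposal is correct and follows essentially the same route as the paper's own proof: subtract the two synchronously coupled integral representations, split drift and diffusion via $\nrm{a+b}^2 \leq 2\nrm{a}^2 + 2\nrm{b}^2$, apply Cauchy--Schwarz plus $s\leq T$ to the drift and It\^o isometry to the stochastic integral, and use Lipschitzness of $v$ and $\sigma$ to assemble the $2(T+1)L^2$ constant. Your observation that the bound must be read in expectation (because of the It\^o isometry) matches what the paper actually does in its displayed computation, where the expectation appears from the first line despite the pathwise phrasing of the lemma statement.
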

\begin{proof}
  By the auxiliary \cref{lem:cauchy} below, Lipschitzness of $v,\sigma$, It\^{o} isometry (see, \eg
  \cite{zhang20a}) and $s\leq T$, we have
  \begin{align*}
    \E \nrm{\Xpi{s}{t} - \Xfl{s}{t}}^2 
       &= \E\nrm*{\int_0^s v(\Xfl{u}{t}) - v(\cpt_{t+u}) \,\dd u + \int_0^s \sigma(\Xfl{u}{t}) -
       \sigma(\cpt_{t+u}) \,\dd B_u^{(t)} }^2 \\
       &\leq 2s \int_0^s \E \nrm*{v(\Xfl{u}{t}) - v(\cpt_{t+u})}^2 \,\dd u + 2\E \int_0^s
       \nrm*{\sigma(\cpt_{t+u}) - \sigma(\Xfl{u}{t})}_F^2\dd u \\
       &\leq 2(T + 1)L^2 \int_0^s \E\nrm{\Xfl{u}{t} - \cpt_{t+u}}^2 \,\dd u.\qedhere
  \end{align*}
\end{proof}

For the rest of the proof, we need to define the continuous-time piecewise-constant processes
$\Xbar(\tau_k + s) = \cpt_k$, $\gammabar(\tau_k + s) = \step_{k+1}$, $\pertbar(\tau_k + s) =
\pert_{k+1}$, and 
$\pert(\tau_k + s) = \exof{\pert_{k+1} \given \Fil_{\tau_k + s}}$, for $0 \leq s < \step_{k+1}$.
Also, let $m(t) = \sup\{ k \geq 0 : \tau_k \leq t\}$ so that $\tau_{m(t)} \leq t < \tau_{m(t) + 1}$.

To bound the second term in \eqref{eq:main-decomp}, we have seen that 
\begin{align*}
  \cpt_{t+s} - \Xpi{s}{t}  &= \int_t^{t+s} v(\Xbar(u))\dd u  - \int_0^s v(X_{t+u}) \dd u \\
    & + \int_t^{t+s} \sigma(\Xbar(u))\dd B_u   - \int_0^s \sigma(X_{t+u}) \dd B_u^{(t)} \\ 
    & + \Delta_\pert(t,s),
\end{align*}
where $\Delta_\pert(t,s)$ plays the role of accumulated noise and bias from time $t$ to $t+s$, and is
defined as
\begin{equation}\label{eq:delta-definition}
    \Delta_\pert(t, s) \coloneqq \sum_{i=n}^{k-1} \step_{i+1}\pert_{i+1} + 
    (t+s - \tau_k)\exof{\pert_{k+1} \given \Fil_{t+s}} 
    - (t - \tau_n)\exof{\pert_{n+1} \given \Fil_{t}},
\end{equation}
with $k = m(t+s)$ and $n = m(t)$. We therefore have
\begin{align}
  \E\nrm{\cpt_{t+s} - \Xpi{s}{t}}^2 
  & \leq 3\E\nrm*{\int_t^{t+s} v(\cpt_{u}) - v(\Xbar(u))\dd u }^2 \nonumber \\
  &\quad + 3\E\nrm*{\int_t^{t+s} \sigma(\cpt_u) - \sigma(\Xbar(u))\dd B_u }^2 
  + 3\E\nrm*{\Delta_\pert(t,s)}^2 \nonumber \\
  & \leq 3s\int_t^{t+s} \E\nrm*{v(\cpt_{u}) - v(\Xbar(u))}^2\dd u \nonumber \\
    &\quad + 3\E\int_t^{t+s} \nrm*{\sigma(\cpt_{u}) - \sigma(\Xbar(u))}_F^2\dd u 
    + 3\E\nrm*{\Delta_\pert(t,s)}^2 \nonumber \\
  & \leq 3(s+1)L^2 \int_t^{t+s} \E\nrm{\cpt_{u} - \Xbar(u)}^2 \dd u
    + 3\E\nrm*{\Delta_\pert(t,s)}^2. \label{eq:bound-integral-dist-picard-interp}
\end{align}
For bounding the term inside the integral, we have 
\begin{align*}
  \E\nrm{X_u - \Xbar(u)}^2 
  &= \E\nrm{(u - \tau_{m(u)})\crl{v(\Xbar(u)) + \pert(u)} + \sigma(\Xbar(u))\,(B_u - B_{\tau_{m(u)}}) }^2 \\
  & \leq 4\gammabar(u)^2\prn*{\E\nrm{v(\Xbar(u))}^2 + \E\nrm{\pert(u)}^2 } +
  2\gammabar(u)\,\E\,\tr\prn*{\sigma(\Xbar(u))^\top \sigma(\Xbar(u))}.
\end{align*}
We have used the fact that
\begin{align*}
  \E\nrm{\sigma(\Xbar(u))\,(B_u - B_{\tau_{m(u)}})}^2
  &=\E\prn*{(B_u - B_{\tau_{m(u)}})^\top \sigma(\Xbar(u))^\top\sigma(\Xbar(u))\,(B_u - B_{\tau_{m(u)}})} \\
  &=\E\tr\prn*{\sigma(\Xbar(u))^\top\sigma(\Xbar(u))\,(B_u - B_{\tau_{m(u)}})(B_u - B_{\tau_{m(u)}})^\top } \\
  &=\E\brk*{\exof{\tr\prn{\sigma(\Xbar(u))^\top\sigma(\Xbar(u))\,(B_u - B_{\tau_{m(u)}})(B_u -
  B_{\tau_{m(u)}})^\top } \given \Fil_{\tau_{m(u)}}}} \\
  &=(u - \tau_{m(u)})\E\brk*{\tr\prn{\sigma(\Xbar(u))^\top\sigma(\Xbar(u))}}
\end{align*}

Notice that since conditional expectation is a projection in $L^2$, we have $\E\nrm{Z(u)}^2 \leq
\E\nrm{\pertbar(u)}^2$. Using this fact, along with boundedness of $\sigma(\cdot)$ by $C_\sigma$, and
\cref{lem:gradient-bound-l2} we get
\begin{align*}
  \E\brk*{\nrm{X_u - \Xbar(u)}^2} %
  & \leq 4\gammabar(u)^2\prn*{\E\nrm{v(\Xbar(u))}^2 + \E\nrm{\pertbar(u)}^2 } +
  2\gammabar(u)\,\E\,\tr\prn*{\sigma(\Xbar(u))^\top \sigma(\Xbar(u))} \\
    &\leq 4\gammabar(u)^2\E\nrm{v(\Xbar(u))}^2 + 8\gammabar(u)^2\sigma^2 +
    4\gammabar(u)^2\bigoh(\gammabar(u)) + 2C_\sigma\gammabar(u) \leq C\gammabar(u),
\end{align*}
for some constant $C > 0$.
Plugging this estimate into \eqref{eq:bound-integral-dist-picard-interp} after taking expectation
yields
\begin{align*}
  \E\brk*{\nrm{\cpt_{t+s} - \Xpi{s}{t}}^2} & \leq 3(s+1)L^2C \int_t^{t+s} \gammabar(u)\dd u + 3\E\nrm{\Delta_\pert(t, s)}^2 \\
    & \leq 3(s + 1)sL^2C \sup_{u \in [t, t+s]} \gammabar(u) + 3\E\nrm{\Delta_\pert(t, s)}^2 \\
    & \leq 3(T + 1)^2L^2C \sup_{u \in [t, t+T]} \gammabar(u) + 3\sup_{u \in [0, T]}\E\nrm{\Delta_\pert(t, u)}^2 
\end{align*}
Taking supremum over $s \in [0, T]$ and noticing that the right-hand-side is independent of $s$ and
$\step_k \to 0$, together with \cref{lem:sup-bound-bias-noise-cumulative} yields
\begin{align}
  A_t &\coloneqq \sup_{0 \leq s \leq T} \E\brk*{\nrm{\cpt_{t+s} - \Xpi{s}{t}}^2}
  \label{eq:definition-A-t} \\
  &\leq 3(T + 1)^2L^2C \sup_{t\leq u \leq t+T} \gammabar(u) 
  + 3\sup_{0 \leq u \leq T} \E\brk*{\nrm{\Delta_\pert(t, u)}^2} \nonumber \\
  &\to 0 \quad \text{as } t\to\infty, \nonumber
\end{align}
showing that the Picard process gets arbitrary close to the original interpolation, as $t\to\infty$.

Let us return to the decomposition \eqref{eq:main-decomp}. By taking expectation and using
\eqref{eq:bound-gronwall} and \eqref{eq:definition-A-t} we obtain
\begin{align*}
  \E\brk*{\nrm{\cpt_{t + s} - \Xfl{s}{t}}^2 } & \leq 2(T + 1) L^2 \int_0^s \E\brk*{\nrm{\cpt_{t+u} - \Xfl{u}{t}}^2}\dd u + 2 A_t \\
    &\leq 2A_t \exp\prn*{ s (T + 1)L^2 } \\
    &\leq 2A_t \exp((T + 1)^2L^2),
\end{align*}
where in the last line we have used the Gr\"onwall lemma. Thus,
\[
  \lim_{t \to \infty} \sup_{s\in [0, T]} \E\brk*{ \nrm{X_{t + s} - \Xfl{s}{t}}^2 } = 0.
\]
Recall that the Wasserstein distance between $X_{t+s}$ and $\Xfl{s}{t}$ is the infimum over all
possible couplings between them, having the correct marginals. As $\Xfl{s}{t}$ has the same marginal
as the Langevin diffusion started from $\cpt_t$ at time $s$, and the synchronous coupling of the
interpolation and the \meandynamics produces a specific coupling between them, we directly get
\[
  \Wtwo(\cpt_{t+s}, \Xfl{s}{t}) \leq \E\brk*{ \nrm{\cpt_{t + s} -
  \Xfl{s}{t}}^2 }^{\frac{1}{2}},
\]
which implies 
\[
  \lim_{t \to \infty} \sup_{s\in [0, T]} \Wtwo(X_{t + s} , \Xfl{s}{t}) = 0,
\]
as desired.\hfill\qed

\subsection{Auxiliary Lemmas}

\lembounddeltaZ*
\begin{proof}
  Define $\Delta_b$ and $\Delta_\noise$ the same way as in \eqref{eq:delta-definition}.
  By Cauchy-Schwarz we have
  \begin{align*}
    &\nrm{\Delta_b(t, s)}^2 \\
    &\hspace{2mm}\leq \prn*{\sum_{i=n}^{k-1} \step_{i+1}\nrm{\bias_{i+1}} 
      + (t+s - \tau_k)\nrm{\exof{\bias_{k+1} \given \Fil_{t+s}}} 
    + (t - \tau_n)\nrm{\exof{\bias_{n+1} \given \Fil_{t}}}}^2 \\
    &\hspace{2mm} \leq \prn*{2\step_{n+1} + s}
    \prn*{\sum_{i=n}^{k-1} \step_{i+1}\nrm{\bias_{i+1}}^2
      + (t+s - \tau_k)\nrm{\exof{\bias_{k+1} \given \Fil_{t+s}}}^2
    + (t - \tau_n)\nrm{\exof{\bias_{n+1} \given \Fil_{t}}}^2},
  \end{align*}
where the last inequality comes from $\sum_{i=n}^{k-1} \step_{i+1} \leq s$, $t+s - \tau_k \leq
\step_{k+1}$, $t - \tau_n \leq \step_{n+1}$, and $\step_{k+1} \leq \step_{n+1}$.

  Noticing that conditional expectation is a contraction in $L^2$ and letting $k' =
  m(t+T)$, we get
  \[
    \sup_{0\leq s \leq T} \E\brk*{ \nrm{\Delta_b(t, s)}^2 } \leq (2 + T)
    \prn*{\sum_{i=n}^{k'-1} \step_{i+1}\E\nrm{\bias_{i+1}}^2
      + \sup_{n\leq j \leq k'+1}\step_{j+1}\E\nrm{\bias_{j+1}}^2
    + \step_{n+1}\E\nrm{\bias_{n+1}}^2}
  \]
  Now, invoking \cref{lem:gradient-bound-l2} yields
  \begin{align*}
    \sup_{0\leq s \leq T} \E\brk*{ \nrm{\Delta_b(t, s)}^2 } &\leq 
    C(2 + T)
    \prn*{\sum_{i=n}^{k'-1} \step_{i+1}^2 + \sup_{n\leq j \leq k'+1}\step_{j+1}^2 + \step_{n+1}^2}
    \\
    &\leq C(2 + T)\prn*{\sum_{i=n}^{k'-1} \step_{i+1}^2 + 2\step_{n+1}^2}\\
    &\leq C(2 + T)\prn*{T + 2\step_{n+1}}\sup_{0\leq s \leq T} \gammabar(t+s).
  \end{align*}
  As $t \to \infty$, the last quantity vanishes, since $\step_n \to 0$.

  For the noise we have 
  \begin{align*}
    \nrm*{\Delta_\noise(t,s)}^2 &\leq 2\nrm*{\sum_{i=n}^{k-1} \step_{i+1}\noise_{i+1}}^2
    + 4\nrm{(t+s - \tau_k)\exof{\noise_{k+1} \given \Fil_{t+s}}}^2
    + 4\nrm{(t-\tau_n)\exof{\noise_{n+1} \given \Fil_t}}^2 \\
    & \leq 2\nrm*{\sum_{i=n}^{k-1} \step_{i+1}\noise_{i+1}}^2 + 4\step_{k+1}^2 \nrm{\noise_{k+1}}^2
    + 4 \step_{n+1}^2\nrm{\noise_{n+1}}^2.
  \end{align*}
  Taking expectations and then sup, we get
  \begin{align*}
    \sup_{0\leq s\leq T} \E\brk*{\nrm*{\Delta_\noise(t,s)}^2} 
    & \leq 2 \sup_{n+1 \leq k \leq m(t+T)} \E\nrm*{\sum_{i=n}^{k-1} \step_{i+1}\noise_{i+1}}^2 + 4\step_{k+1}^2 \sigma^2
    + 4 \step_{n+1}^2\sigma^2.
  \end{align*}
  Since $\crl{\noise_i}$ is a martingale difference sequence, we have that $\crl*{\sum_{i=n}^{k-1}
  \step_{i+1}\noise_{i+1}}_{k > n}$ is a martingale. Thus, by the boundedness of the second moments
  of $\noise_i$, we get
  \[
    \E\nrm*{\sum_{i=n}^{k-1} \step_{i+1}\noise_{i+1}}^2 
      = \sum_{i=n}^{k-1} \step_{i+1}^2\E\nrm*{\noise_{i+1}}^2
      \leq \sigma^2 \sum_{i=n}^{k-1} \step_{i+1}^2.
  \]
  Hence,
  \[
    \lim_{n\to\infty} 
    \sup\crl*{\E\nrm{\sum_{i=n}^{k-1}\step_{i+1}\noise_{i+1}}^2 :  n < k \leq m(\tau_n + T) }
    \leq \lim_{n\to\infty} \sigma^2 \sum_{i=n}^{\infty} \step_{i+1}^2 = 0.
  \]
\end{proof}

\gradientboundltwo*
\begin{proof}
Without loss of generality, suppose $v$ has a stationary point at $0$. We repeatedly use the fact
that $\E \nrm{v(\dpt_k)}^2 \leq L^2 \E \nrm{\dpt_k}^2$. Moreover, by \cref{ass:f} we have
$\inner{v(x), x} \leq C_v(\nrm{x} + 1)$, and $\nrm{\sigma(x)}_F^2 \leq C_\sigma$.

Define $a_k \coloneqq \E\nrm{\dpt_k}^2$. We have
\begin{align}
  a_{k+1} - a_k &=
   \step_{k+1}^2 \E \nrm{v(\dpt_k) + \pert_{k+1}}^2
   + \step_{k+1} \E \nrm{\sigma(\dpt_k)\stdgauss_{k+1}}^2
   + 2\step_{k+1}\E \inner{\dpt_k, v(\dpt_k) + \pert_{k+1}} \nonumber \\
   &\hspace{1cm}
   + 2\step_{k+1}^{1/2}\E \inner{\dpt_k, \sigma(\dpt_k)\stdgauss_{k+1}} 
   + 2\step_{k+1}^{3/2}\E \inner{v(\dpt_k) + \pert_{k+1}, \sigma(\dpt_k)\stdgauss_{k+1}} \nonumber
   \\
   &\leq 2L^2\step_{k+1}^2a_k 
   + 2\step_{k+1}^2\E \nrm{\pert_{k+1}}^2
   + \step_{k+1} C_\sigma 
   + 2\step_{k+1} C_v (\sqrt{a_k} + 1)
   + 2\step_{k+1}\sqrt{a_k}\sqrt{\E\nrm{\pert_{k+1}}^2} \nonumber \\
   &\hspace{1cm} 
   + 2\step_{k+1}^{3/2}\sqrt{C_\sigma}\sqrt{\E\nrm{\pert_{k+1}}^2}
   \label{eq:initial-bound-an-diff}
\end{align}

  By \cref{ass:noise-bias}, there is some $C_b > 0$ such that
  $\E\nrm{\bias_{k+1}}^2 \leq C_b(\step_{k+1}^2a_k + \step_{k+1})$, and we have
\begin{equation}\label{eq:bound-on-second-moment-bias}
  \E\nrm{\pert_{k+1}}^2 \leq 2\E\nrm{\bias_{k+1}}^2 + 2\E\nrm{\noise_{k+1}}^2 \leq 
  2C_b(\step_{k+1}^2a_k + \step_{k+1}) + 2\sigma^2.
\end{equation}
Moreover, as $\sqrt{p+q}\leq \sqrt{p} +\sqrt{q}$, we have
\begin{equation}\label{eq:bound-on-sqrt-second-moment-bias}
  \sqrt{\E\nrm{\pert_{k+1}}^2} \leq \sqrt{2C_b}(\step_{k+1}\sqrt{a_k} + \sqrt{\step_{k+1}}) + \sqrt{2}\sigma.
\end{equation}
Plugging the bounds from \eqref{eq:bound-on-second-moment-bias} and 
\eqref{eq:bound-on-sqrt-second-moment-bias} into \eqref{eq:initial-bound-an-diff} gives
\begin{equation}\label{eq:recursion-for-a-n}
  \begin{aligned}
    a_{k+1} - a_k &\leq 
    2L^2\step_{k+1}^2a_k + 4C_b\step_{k+1}^4 a_k + 4C_b\step_{k+1}^3 + 4\step_{k+1}^2\sigma^2 \\
    &\hspace{5mm} + \step_{k+1} C_\sigma 
    + 2\step_{k+1} C_v \sqrt{a_k} + 2\step_{k+1} C_v  \\
    &\hspace{5mm}
    + 2\sqrt{2C_b}\step_{k+1}^2 a_k + 2\sqrt{2C_b}\step_{k+1}^{3/2}\sqrt{a_k} +
    2\sqrt{2}\sigma\step_{k+1}\sqrt{a_k} \\
    &\hspace{5mm} + 2\sqrt{2C_bC_\sigma}\step_{k+1}^{5/2}\sqrt{a_k} +
    2\sqrt{2C_bC_\sigma}\step_{k+1}^2 + 2\step_{k+1}^{3/2}\sqrt{2C_\sigma}\sigma \\
    &\eqqcolon P\step_{k+1}^2\,a_k + Q\step_{k+1}\sqrt{a_k} + R\step_{k+1},
  \end{aligned}
\end{equation}
where
\begin{align*}
  P &= 2L^2 + 4C_b\step_{k+1}^2 + 2\sqrt{2C_b} \\
  Q &= 2C_v + 2\sqrt{2C_b}\sqrt{\step_{k+1}} + 2\sqrt{2}\sigma + 2\sqrt{2C_b}\step_{k+1} +
  2\sqrt{2C_bC_\sigma}\step_{k+1}^{3/2} \\
  R &= 4C_b\step_{k+1}^2 + 4\step_{k+1}\sigma^2 +  C_\sigma + 2C_v + 2\sqrt{2C_bC_\sigma}\step_{k+1}
  + 2\step_{k+1}^{1/2}\sqrt{2C_\sigma}\sigma.
\end{align*}
The exact values of $P$, $Q$, and $R$ are irrelevant, and we only need upper
bounds for them. Assuming that $\gamma_{k+1} < 1$ for all $k$, we replace the
three quantities by
  \begin{equation}\label{eq:definition-p}
  \begin{aligned}
    P &= 2L^2 + 4C_b + 2\sqrt{2C_b}  \\
    Q &= 2C_v + 2\sqrt{2C_b} + 2\sqrt{2}\sigma + 2\sqrt{2C_b} + 2\sqrt{2C_bC_\sigma} \\
    R &= 4C_b + 4\sigma^2 + C_\sigma + 2C_v + 2\sqrt{2C_bC_\sigma} + 2\sqrt{2C_\sigma}\sigma.
  \end{aligned}
  \end{equation}

  Now, define $h_k = \step_{k+1}^2 a_k$. The recursion
  \eqref{eq:recursion-for-a-n} in terms of $h_k$ becomes
\begin{align*}
  h_{k+1} \leq h_k(1 + P\step_{k+1}^2)\frac{\step_{k+2}^2}{\step_{k+1}^2} 
  + \sqrt{h_k} Q\step_{k+2}^2 + R\step_{k+1}\step_{k+2}^2.
\end{align*}
We now prove that there exists some $M > 0$ so that $h_k \leq M\step_{k+1}$ by induction. Suppose
it is the case for $k$, and we prove it for $k+1$. Using the induction hypothesis we get
\begin{align*}
  h_{k+1} &\leq M\step_{k+1}(1 + P\step_{k+1}^2)\frac{\step_{k+2}^2}{\step_{k+1}^2} 
  + \sqrt{M\step_{k+1}} Q\step_{k+2}^2 + R\step_{k+1}\step_{k+2}^2  \\
  & = M(1 + P\step_{k+1}^2)\frac{\step_{k+2}^2}{\step_{k+1}} 
+ \sqrt{M}\, Q \sqrt{\step_{k+1}} \step_{k+2}^2 + R\step_{k+1}\step_{k+2}^2
\end{align*}
For the last to be less than $M\step_{k+2}$, we have to verify
\[
  M(1 + P\step_{k+1}^2)\frac{\step_{k+2}}{\step_{k+1}} 
+ \sqrt{M}Q\sqrt{\step_{k+1}}\step_{k+2} + R\step_{k+1}\step_{k+2} 
  \leq M
\]
or equivalently,
\[
  M\prn*{\frac{\step_{k+2}}{\step_{k+1}} + P\step_{k+1}\step_{k+2} - 1}
+ \sqrt{M} Q\sqrt{\step_{k+1}}\step_{k+2} + R\step_{k+1}\step_{k+2} 
  \leq 0.
\]
This is a quadratic equation in $\sqrt{M}$, and for this inequality to hold, we prove that the
leading coefficient is negative, and the largest root is bounded above by some constant not
depending on $n$.

Negativity of the leading coefficient is equivalent to 
\[
  \frac{\step_{k+2}}{\step_{k+1}} + P\step_{k+1}\step_{k+2} < 1,
\]
which is implied by our assumption on the step size.

The larger root of the equation is 
\begin{align*}
  &\frac{\prn*{-4 \step_{k+1}^2 \step_{k+2}^2 P R + \step_{k+1}\step_{k+2}(\step_{k+2} Q^2+4 R)-4 R
    \step_{k+2}^2}^{1/2} +\sqrt{\step_{k+1}}\step_{k+2}Q}{2 (1 - \step_{k+1} \step_{k+2} P -
    \step_{k+2}/\step_{k+1})} \\
    &\hspace{5mm} < \frac{\sqrt{\step_{k+1}}\step_{k+2}Q + \sqrt{R\step_{k+1}\step_{k+2}}}%
    {(1 -\step_{k+1} \step_{k+2} P - \step_{k+2}/\step_{k+1})} \\
  &\hspace{5mm} \leq \frac{\sqrt{\step_{k+1}}\step_{k+1}Q + \sqrt{R}\step_{k+1}}%
    {(1 -\step_{k+1} \step_{k+2} P - \step_{k+2}/\step_{k+1})}.
\end{align*}
By our assumption on the step size that
\[
  \frac{\step_{k+2}}{\step_{k+1}} + P\step_{k+1}\step_{k+2} < 1 - \step_{k+1},
\]
we get that the larger root is smaller than
\[
  \frac{\sqrt{\step_{k+1}}\step_{k+1}Q + \sqrt{R}\step_{k+1}}%
  {\step_{k+1}} = \sqrt{\step_{k+1}}Q + \sqrt{R} < Q + \sqrt{R}.
\]
Letting $M := Q + \sqrt{R}$ gives the desired result.

The second argument of the lemma follows from \cref{ass:noise-bias} and the first
result of the lemma.
\end{proof}

\begin{lemma}\label{lem:cauchy}
For a vector valued function $g \in L^2(\R;\R^d)$, one has
\[
  \nrm*{\int_0^s g(u)\,du}^2 \leq 
    \prn*{\int_0^s \nrm{g(u)}\,du}^2 \leq s \int_0^s \nrm{g(u)}^2 \,du.
\]
\end{lemma}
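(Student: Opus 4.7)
The plan is to establish the two inequalities independently, each by invoking a classical integral inequality; this is a routine lemma with no serious obstacles.

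For the first inequality, I would apply the triangle inequality for Bochner integrals to the vector-valued function $g$, which asserts that $\nrm{\int_0^s g(u)\,\mathrm{d}u} \leq \int_0^s \nrm{g(u)}\,\mathrm{d}u$. Since both sides are nonnegative real numbers, squaring preserves the inequality and immediately gives $\nrm*{\int_0^s g(u)\,\mathrm{d}u}^2 \leq \prn*{\int_0^s \nrm{g(u)}\,\mathrm{d}u}^2$. A lower-tech alternative, if one wishes to avoid citing the Bochner triangle inequality, is to expand the squared Euclidean norm as $\inner{\int_0^s g(u)\,\mathrm{d}u, \int_0^s g(v)\,\mathrm{d}v} = \int_0^s\!\int_0^s \inner{g(u),g(v)}\,\mathrm{d}u\,\mathrm{d}v$ and then bound the integrand via Cauchy--Schwarz in $\R^d$ by $\nrm{g(u)}\,\nrm{g(v)}$, which factors as $\prn*{\int_0^s \nrm{g(u)}\,\mathrm{d}u}^2$.

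For the second inequality, I would apply the scalar Cauchy--Schwarz inequality to the pair of functions $1$ and $\nrm{g(\cdot)}$ on the interval $[0,s]$:
\[
  \int_0^s 1 \cdot \nrm{g(u)}\,\mathrm{d}u
  \leq \prn*{\int_0^s 1^2\,\mathrm{d}u}^{1/2}\prn*{\int_0^s \nrm{g(u)}^2\,\mathrm{d}u}^{1/2}
  = \sqrt{s}\,\prn*{\int_0^s \nrm{g(u)}^2\,\mathrm{d}u}^{1/2}.
\]
Squaring both nonnegative sides delivers $\prn*{\int_0^s \nrm{g(u)}\,\mathrm{d}u}^2 \leq s \int_0^s \nrm{g(u)}^2\,\mathrm{d}u$. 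Chaining the two bounds completes the proof. The hypothesis $g\in L^2(\R;\R^d)$ ensures that all integrals above are finite on $[0,s]$ (in fact $L^2$ on a bounded interval implies $L^1$, justifying the use of $\int_0^s \nrm{g(u)}\,\mathrm{d}u$ as a finite quantity). No step presents a real obstacle; the only mild subtlety is choosing whether to invoke the vector-valued triangle inequality directly or derive it from the $\R^d$ Cauchy--Schwarz inequality as sketched above.
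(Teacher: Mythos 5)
Your proof is correct. The paper states this lemma without any proof, treating it as a routine auxiliary fact, and your argument---the Bochner-integral triangle inequality (or its elementary derivation by expanding the inner product and applying Cauchy--Schwarz in $\R^d$) followed by the scalar Cauchy--Schwarz inequality with the constant function $1$ on $[0,s]$---is exactly the standard argument one would supply.
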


\section{Proofs for \texorpdfstring{\cref{sec:stability}}{Section 5}}
\label{app:stability}
\subsection{Proof of \texorpdfstring{\cref{thm:LRMstability}}{Theorem 3}}

For brevity, let us write $\Fil_k$ instead of $\Fil_{\tau_k}$.
Opening up $\nrm{x_{k+1}}^2 = \nrm{ x_k + \step_{k+1} \crl*{ v(x_k) + \pert_{k+1} } +
\sqrt{\step_{k+1}}\sigma(x_k)\,\xi_{k+1} }^2$ and ignoring every term that is zero-mean under $\exof{\cdot
\given \Fil_k}$, we get
\begin{align}
\nonumber
\Ekc &= \E\Big[\nrm{x_k}^2 + 2\step_{k+1}\inner{x_k, v(x_k) + Z_{k+1} }  \\
& \hspace{6mm} + \step_{k+1}^2 \nrm{ v(x_k) + \pert_{k+1}}^2  + \step_{k+1}
  \nrm{\sigma(x_k)\xi_{k+1}}^2 + 2\step_{k+1}^{\frac{3}{2}}\inner{\sigma(x_k)\xi_{k+1}, \bias_{k+1}}  \,\big\vert\, \mathcal{F}_k\Big] \nonumber \\
&\leq \nrm{x_k}^2 + 2\step_{k+1}\prn*{ \inner{x_k, v(x_k)} + C_\sigma /2 } + 2\step_{k+1}^2 \nrm{v(x_k)}^2 \nonumber \\
&\hspace{6mm}+  \E\brk*{2\step_{k+1}^2 \nrm{\pert_{k+1}}^2 + 2\step_{k+1} \inner{x_k, \pert_{k+1}} +
2\step_{k+1}^{\frac{3}{2}}\inner{\sigma(x_k)\xi_{k+1}, \bias_{k+1}} \,\big\vert\, \mathcal{F}_k} \nonumber\\
&\leq \nrm{x_k}^2 + 2\step_{k+1}\prn*{ \inner{x_k, v(x_k)} + C_\sigma/2 +
\step_{k+1}^{\frac{1}{2}} C_\sigma / 4} + 2\step_{k+1}^2 \nrm{v(x_k)}^2   \label{eq:hold}\\
\nonumber
&\hspace{10mm}+  \E\brk*{2\step_{k+1}^2 \nrm{\pert_{k+1}}^2\vert \mathcal{F}_k}   + \step_{k+1}^{\frac{3}{2}} \E\brk*{\nrm{\bias_{k+1}}^2\vert \mathcal{F}_k}  + 2 \E\brk*{\step_{k+1} \inner{x_k, \bias_{k+1}} \vert \mathcal{F}_k}.
\end{align}
Recalling \eqref{eq:bias-assumption} in \cref{ass:noise-bias}, we have for some $C>0$
\begin{equation}
  \E \nrm{\pert_{k+1}}^2 \leq 2 \sigma^2 + 2C \prn*{\step_{k+1}^2 \E\nrm{v(x_k)}^2 + \step_{k+1} }
\end{equation}
Without loss of generality, assume $\step_k \leq 1$ and $\Ek \geq 1$ (so that $\prn*{\Ek}^2 \geq
\Ek$) for all $k$. Then, $\nrm{v(x_k)}^2 \leq L^2 \nrm{x_k}^2$, together with
\cref{asm:dissipativity} and the Cauchy-Schwartz inequality on the last term of \eqref{eq:hold}, implies
\begin{align}
\Ek[k+1] &\leq \Ek - 2\alpha \step_{k+1} \Ek  + 2\step_{k+1}\left(\beta + C_\sigma +
\frac{1}{2}\step_{k+1}^{\frac{1}{2}}C_\sigma \right) + 2L^2 \step_{k+1}^2 \Ek \nonumber
\\ &\hspace{15mm}  + 2\step_{k+1}^2 \brk*{2\sigma^2+2C\prn*{L^2\step_{k+1}^2 \Ek + \step_{k+1} }}\nonumber 
\\ &\hspace{15mm}  + \step_{k+1}^{\frac{3}{2}}C \prn*{L^2\step_{k+1}^2 \Ek + \step_{k+1} }\nonumber 
\\ &\hspace{15mm}   +2 \step_{k+1}\sqrt{C}\sqrt{ L^2\step_{k+1}^2 \left(\Ek\right)^2 + \step_{k+1} \Ek } \nonumber \\
&\leq \Ek(1- C_1\step_{k+1} + C_2\step_{k+1}^{\frac{3}{2}} ) + C_3\step_{k+1}  \nonumber
\end{align}
for some constants $C_1,C_2,C_3$ depending on $L, C, \sigma, \alpha, \beta$, and $d$. Since $\step_k\to 0$, there exist $\tilde{\alpha},\tilde{\beta} >0$ and $k_0$ such that, for all $k \geq k_0$, 
\begin{align*}
  \Ek[k+1] &\leq \Ek(1- \tilde{\alpha}\step_{k+1}) + \tilde{\beta}\step_{k+1}, \quad 1- \tilde{\alpha}\step_{k+1} > 0.
\end{align*}
A simple induction yields 
\begin{align*}
\sup_k\ \Ek \leq \max\left\{ \frac{\tilde{\beta}}{\tilde{\alpha}} ,  \Ek[k_0]  \right\}
\end{align*}which concludes the proof.\hfill\qed

\subsection{Proof of \texorpdfstring{\cref{thm:LRMstability2}}{Theorem 4} for Constant Diffusion}%
\label{sec:proof-weak-dissipativity-ld}

Before proceeding, we need a lemma which can be distilled from \cite[Proposition 8]{durmus2017nonasymptotic}:
\begin{lemma}
\label{lem:helper}
Suppose $\grad f$ is $L$-Lipschitz. Fix $x\in\R^d$ and $\step>0$, let $\tilde{x}^+ = x- \step\nabla f(x) + \sqrt{2\step}\xi$. Then
\begin{align}
\E \brk*{\exp\prn*{  \frac{1}{2}\inner{\nabla f(x), \tilde{x}^+ -x } + \frac{L}{4} \nrm{\tilde{x}^+ -x}^2   } }
\leq (1-\step L)^{-d/2}e^{-\frac{\step}{4}\nrm{\nabla f(x)}^2}.
\end{align}
\end{lemma}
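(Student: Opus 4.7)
\textbf{Proof proposal for \cref{lem:helper}.}

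The plan is to reduce the bound to a direct computation of a Gaussian moment generating function. First, I would substitute $\tilde{x}^+ - x = -\step \nabla f(x) + \sqrt{2\step}\,\xi$ into the exponent. Expanding yields a quadratic expression in $\xi$ of the form
\begin{equation*}
  \tfrac{1}{2}\inner{\nabla f(x), \tilde{x}^+ - x} + \tfrac{L}{4}\nrm{\tilde{x}^+ - x}^2
  \;=\; A\,\nrm{\nabla f(x)}^2 \;+\; \sqrt{2\step}\,B\,\inner{\nabla f(x),\xi} \;+\; C\,\nrm{\xi}^2,
\end{equation*}
with $A = -\tfrac{\step}{2} + \tfrac{L\step^2}{4}$, $B = \tfrac{1-L\step}{2}$, and $C = \tfrac{L\step}{2}$. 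The $\xi$-dependent part is thus a quadratic Gaussian functional.

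Next, I would apply the classical identity: for any $v\in\R^d$, any scalar $a$, and any $b < 1/2$,
\begin{equation*}
  \E\exp\bigl(a\inner{v,\xi} + b\nrm{\xi}^2\bigr)
  \;=\; (1-2b)^{-d/2}\,\exp\!\left(\frac{a^2\nrm{v}^2}{2(1-2b)}\right),
\end{equation*}
obtained by completing the square and integrating against the standard Gaussian density on $\R^d$. The hypothesis $\step L < 1$ (implicit since we need $(1-\step L)^{-d/2}$ to be finite) guarantees $2C = \step L < 1$, so the formula applies with $a = \sqrt{2\step}\,B$, $b = C$, and $v = \nabla f(x)$.

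Finally, I would combine terms and check that the deterministic exponent $A\,\nrm{\nabla f(x)}^2$ together with the Gaussian-integration contribution $\frac{a^2\nrm{v}^2}{2(1-2b)} = \frac{\step(1-L\step)}{4}\nrm{\nabla f(x)}^2$ simplifies exactly to $-\tfrac{\step}{4}\nrm{\nabla f(x)}^2$, matching the stated right-hand side. I expect the main obstacle to be purely bookkeeping: keeping track of the various $\step$-prefactors through the completion of the square, and confirming that the cross-term contribution precisely cancels the $\tfrac{L\step^2}{4}$ piece coming from $A$. Notably, the argument actually delivers equality, with the Lipschitzness of $\grad f$ entering only through the convergence condition $\step L < 1$.
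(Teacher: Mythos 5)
Your computation is correct, and I verified the bookkeeping: with $g=\nabla f(x)$ the exponent is $\bigl(-\tfrac{\step}{2}+\tfrac{L\step^2}{4}\bigr)\nrm{g}^2+\sqrt{2\step}\,\tfrac{1-L\step}{2}\inner{g,\xi}+\tfrac{L\step}{2}\nrm{\xi}^2$, the Gaussian identity with $b=\tfrac{L\step}{2}$ contributes the prefactor $(1-L\step)^{-d/2}$ and the term $\tfrac{\step(1-L\step)}{4}\nrm{g}^2$, and the $\|g\|^2$ coefficients sum to exactly $-\tfrac{\step}{4}$. The paper does not actually prove this lemma; it only remarks that it "can be distilled from" Proposition 8 of Durmus and Moulines (2017), so your self-contained derivation via completing the square is a genuine addition rather than a reproduction of the paper's argument. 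Your closing observation is also accurate and worth keeping: the bound holds with equality, and the Lipschitz hypothesis on $\grad f$ plays no role in the lemma itself beyond supplying the constant $L$ and the implicit integrability condition $\step L<1$ (Lipschitzness is only used where the lemma is invoked, to upper-bound $f(x_{k+1})-f(x_k)$ by the quadratic expression fed into the lemma). The one thing I would make explicit in a final write-up is the standing assumption $\step L<1$, since without it the expectation is infinite and the stated right-hand side is not defined.
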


Let $\tilde{x}_{k+1} \coloneqq x_k- \step_{k+1} \nabla f(x_k)  + \sqrt{2\step_{k+1}}\,\xi_{k+1} $ so that $x_{k+1} - x_k = \tilde{x}_{k+1} - x_k - \step_{k+1}\prn*{  \noise_{k+1}+\bias_{k+1} }$. Conditioned on $x_k, \noise_{k+1}, \noise'_{k+1}, \xi'_{k+1}$, and using the $L$-Lipschitzness of $\nabla f$, we get
\begin{align}
\nonumber
&e^{-\frac{1}{2}f(x_k)} \E e^{\frac{1}{2}f(x_{k+1})} \\
&\hspace{4mm}\leq \E \exp\prn*{\frac{1}{2}\inner{\nabla f(x_k), x_{k+1}-x_k} + \frac{L}{4}\nrm{x_{k+1}-x_k}^2} \\
&\hspace{4mm}\leq \E\exp\Bigg\{
     \frac{1}{2} \inner{\nabla f(x_k), \tilde{x}_{k+1}-x_k} - \frac{1}{2} \inner{ \nabla f(x_k), \step_{k+1} \noise_{k+1} } \\
       &\hspace{20mm}  - \frac{1}{2} \inner{ \nabla f(x_k), \step_{k+1} \bias_{k+1} }  + \frac{L}{2} \nrm{\tilde{x}_{k+1} -x_k}^2  + {L\step_{k+1}^2}\nrm{\noise_{k+1}}^2+ {L\step_{k+1}^2}\nrm{\bias_{k+1}}^2
\Bigg\}.
\end{align}
Let $\delta \in (0,1)$. Since
\begin{align*}
- \frac{1}{2} \inner{ \nabla f(x_k), \step_{k+1} \noise_{k+1} } & \leq \step_{k+1}^{2-\delta}\nrm{  \nabla f(x_k)}^2 + \step_{k+1}^\delta \nrm{\noise_{k+1}}^2, \\
- \frac{1}{2} \inner{ \nabla f(x_k), \step_{k+1} \bias_{k+1} } & \leq \step_{k+1}^{2}\nrm{  \nabla f(x_k)}^2 +  \nrm{\bias_{k+1}}^2,
\end{align*}we have
\begin{align}
&e^{-\frac{1}{2}f(x_k)} \E e^{\frac{1}{2}f(x_{k+1})}\\
    &\hspace{4mm}\leq \E\exp\Bigg\{
     \frac{1}{2} \inner{\nabla f(x_k), \tilde{x}_{k+1}-x_k} + \frac{L}{2} \nrm{\tilde{x}_{k+1} -x_k}^2  \\
       &\hspace{10mm}     +\prn*{\step_{k+1}^{2-\delta}+\step_{k+1}^{2} }\nrm{\nabla f(x_k)}^2  + \prn*{L\step_{k+1}^2 + \step_{k+1}^\delta}\nrm{\noise_{k+1}}^2+ \prn*{L\step_{k+1}^2+1}\nrm{\bias_{k+1}}^2
\Bigg\}.
\end{align}Invoking \eqref{eq:bias-condition} an denoting $c' \triangleq \prn*{L\step_{k+1}^2+1}\cdot c$, we get
\begin{align}
e^{-\frac{1}{2}f(x_k)} \E e^{\frac{1}{2}f(x_{k+1})} &\leq e^{A_k} \cdot \E\exp\Bigg\{
     \frac{1}{2} \inner{\nabla f(x_k), \tilde{x}_{k+1}-x_k} + \frac{L}{2} \nrm{\tilde{x}_{k+1} -x_k}^2   + c' \cdot \step_{k+1}\nrm{\xi_{k+1}}^2
\Bigg\},
\end{align}
where, 
\begin{equation}
    \begin{aligned}
A_k \triangleq& \prn*{\step_{k+1}^{2-\delta}+\step_{k+1}^{2} + c'\step_{k+1}^{2}  }\nrm{\nabla f(x_k)}^2  \\
    &+ \prn*{L\step_{k+1}^2 + \step_{k+1}^\delta}\nrm{\noise_{k+1}}^2 \\
    &+ c'\prn*{\step^2_{k+1}\nrm{\noise'_{k+1}}^2  +  \step_{k+1} \nrm{\xi_{k+1}'}^2}.
    \end{aligned}
\end{equation}
Recalling that $\sqrt{2\step_{k+1}}\xi_{k+1} = \tilde{x}_{k+1}-x_k + \step_{k+1}\nabla f(x_k)$, we have $  \step_{k+1}\nrm{\xi_{k+1}}^2 \leq  \nrm{\tilde{x}_{k+1}-x_k}^2 + \step_{k+1}^2\nrm{\nabla f(x_k)}^2 $, and thus
\begin{align}
e^{-\frac{1}{2}f(x_k)} \E e^{\frac{1}{2}f(x_{k+1})} &\leq e^{A'_k} \cdot \E\exp\Bigg\{
     \frac{1}{2} \inner{\nabla f(x_k), \tilde{x}_{k+1}-x_k} + \prn*{\frac{L}{2}+c'} \nrm{\tilde{x}_{k+1} -x_k}^2   
\Bigg\},
\end{align}
where $A'_k = A_k + c'\step_{k+1}^2\nrm{\nabla f(x_k)}^2$. \cref{lem:helper} then implies
\begin{align}
e^{-\frac{1}{2}f(x_k)} \E e^{\frac{1}{2}f(x_{k+1})} &\leq e^{A''_k} \cdot (1- \step_{k+1}L')^{-\frac{d}{2}}
\end{align}where $A''_k = A'_k - \frac{\step_{k+1}}{4}\nrm{\nabla f(x_k)}^2$.

We now take the expectation over $x_k, \noise_{k+1}, \noise'_{k+1}, \xi'_{k+1}$ (in other words, we are now only conditioning on $x_k$). Set $\eps \triangleq (1- \step_{k+1}L')^{-\frac{1}{2}} -1 > 0$. Since $\noise_{k+1}, \noise'_{k+1}, \xi'_{k+1}$ are sub-Gaussian and since $\step_k \to 0$, for $k$ sufficiently large we have 
\begin{align}
\E A''_k &\leq (1+\eps)\cdot \exp\brk*{\prn*{ - \frac{\step_{k+1}}{4}  + \step_{k+1}^{2-\delta}+\step_{k+1}^{2} + c'\step_{k+1}^{2}   + c'\step_{k+1}^2}\nrm{\nabla f(x_k)}^2 } \\
&\leq (1+\eps)\cdot e^{ -\frac{\step_{k+1}}{8} \nrm{\nabla f(x_k)}^2}.
\end{align}To summarize, we have shown that, conditioned on $x_k$, 
\begin{equation}
e^{-\frac{1}{2}f(x_k)} \E e^{\frac{1}{2}f(x_{k+1})} \leq (1- \step_{k+1}L')^{-\frac{d+1}{2}} e^{ -\frac{\step_{k+1}}{8} \nrm{\nabla f(x_k)}^2}.
\end{equation}
A simple induction \`a la \cite[Lemma 1 \& Proposition 8]{durmus2017nonasymptotic} then concludes the proof.\hfill\qed

\subsection{Proof of \texorpdfstring{\cref{thm:LRMstability2}}{Theorem 4} for Mirror Langevin}%
\label{sec:mirror-weak-dissipativity}

Here, we bring the proof of \cref{thm:LRMstability2} for the case of \cref{ex:Mirror-LD} and without
noise. The proof for the noisy case is the same as in \cref{sec:proof-weak-dissipativity-ld}.

Define
\[
  x^+ = x - \step \grad f \circ \grad \phi^*(x) + \sqrt{2\step} (\grad^2 \phi^*(x)^{-1})^{1/2} \xi,
\]
where $\xi$ is a standard Gaussian random variable. Let $U(x) = f(\grad \phi^*(x))$. For a fixed
$x$, we have
\[
  \E e^{\frac{1}{2}U(x^+) - \frac{1}{2}U(x)} = \frac{1}{(2\pi)^{d/2}} \int
  \exp\prn*{\frac{1}{2}U(x^+) - \frac{1}{2}U(x) - \frac{\nrm{\xi}^2}{2}}\,d\xi
\]
Notice that we have
\[
  \xi = \frac{1}{\sqrt{2\step}} (\grad^2 \phi^*(x))^{1/2} \prn*{x^+ - x + \step
  \grad f \circ \grad \phi^*(x)}
\]
which implies 
\[
  d\xi = (\sqrt{2\step})^{-d}\sqrt{\det \grad^2 \phi^*(x)}\,dx^+
\]
Thus, the integral, after the change of variable from $\xi$ to $x^+$ becomes
\begin{equation}\label{eq:integral-intermsof-xplus}
  \frac{1}{C}\int\exp\prn*{\frac{1}{2} U(x^+) - \frac{1}{2} U(x) - \frac{1}{4\step} \nrm{(\grad^2 \phi^*(x))^{1/2} \prn*{x^+ - x + \step
  \grad f \circ \grad \phi^*(x)}}^2}\,dx^+
\end{equation}
with $C = (4\pi\step)^{d/2}\sqrt{\det \grad^2 \phi^*(x)^{-1}}$. Now we use the smoothness
of $f$:
\begin{align*}
  U(x^+) - U(x) &= f(\grad \phi^*(x^+)) - f(\grad \phi^*(x)) \\
  &\leq {\color{Maroon}\inner{\grad^2 \phi^*(x) \grad f (\grad \phi^*(x)), x^+ - x }} + \frac{L}{2} \nrm{x^+ -
  x}^2
\end{align*}
On the other hand, we have
\begin{align*}
  &\nrm{(\grad^2 \phi^*(x))^{1/2} \prn*{x^+ - x + \step \grad f \circ \grad \phi^*(x)}}^2 \\
  &\qquad = \nrm{(\grad^2 \phi^*(x))^{1/2}(x^+ - x)}^2
  + \step^2 \nrm{(\grad^2 \phi^*(x))^{1/2}\grad f(\grad \phi^*(x))}^2 \\
  &\qquad\qquad + {\color{Maroon}2\step \inner{\grad^2 \phi^*(x) \grad f \grad \phi^*(x), x^+ - x}}
\end{align*}
Notice that in \eqref{eq:integral-intermsof-xplus}, the colored terms cancel out, and what we are
left with is
\begin{align*}
  &\E e^{\frac{1}{2}U(x^+) - \frac{1}{2}U(x)} \\
  &\leq \frac{1}{C} \int \exp \prn*{
    \frac{L}{4}\nrm{x^+ - x}^2 - \frac{1}{4\step}\nrm{(\grad^2 \phi^*(x))^{1/2}(x^+ - x)}^2
    - \frac{\step}{4}\nrm{(\grad^2 \phi^*(x))^{1/2}\grad f(\grad \phi^*(x))}^2
  }\,dx^+
\end{align*}
As, by our assumption, $\grad^2 \phi^*$ is bounded from above and below, we get the exact form as in
\cref{lem:helper}. The rest of the proof is the same as in \cref{sec:proof-weak-dissipativity-ld}. \hfill\qed

\subsection{Proof of \texorpdfstring{\cref{prop:example-proofs}}{Proposition 1}}

In this section, we prove that \crefrange{ex:RMM}{ex:PLA} satisfy our bias conditions, which, as
we have seen in \cref{sec:stability}, implies \cref{prop:example-proofs}. For brevity, we write
$\Fil_k$ for $\Fil_{\tau_k}$.

\para{Proof for \cref{ex:RMM}}
For randomized mid-point method, by replacing $\stochgrad f(x_k)$ and $\stochgrad f(\xmid{k})$ with
$\grad f(x_k) + \noise_{k+1}'$ and $\grad f(\xmid{k}) + \noise_{k+1}$ respectively, we have
\begin{align*}
  \xmid{k} &= x_k - \step_{k+1}\alpha_{k+1}\crl{\grad f(x_k) + \noise_{k+1}'} +
    \sqrt{2\step_{k+1}\alpha_{k+1}} \stdgauss_{k+1}', \\
  x_{k+1} &= x_k - \step_{k+1}\crl{\grad f(\xmid{k}) + \noise_{k+1}} +
    \sqrt{2\step_{k+1}} \stdgauss_{k+1},
\end{align*}
where $\crl{\alpha_k}$ are \iid and uniformly distributed in $[0,1]$, $\crl{\noise_k}$ and
$\crl{\noise_k'}$
are noises in evaluating $\grad f$ at the corresponding points, and $\stdgauss_k, \stdgauss_k'$ are
independent standard Gaussians.

Notice that the Lipschitzness of $\grad f$, and the fact that $\alpha_k \leq 1$ implies that the
bias term $\bias_{k+1} \coloneqq \grad f(\xmid{k}) - \grad f(x_k)$ satisfies
\begin{align*}
  \exof{\nrm{\bias_{k+1}}^2 \given \Fil_k}
  &\leq L^2\exof{\nrm{\xmid{k} - x_k}^2 \given \Fil_k} \\ 
    &\leq L^2 \prn*{ \step_{k+1}^2\exof{\nrm{\grad f(x_k) + \noise_{k+1}'}^2 \given \Fil_k} +
      2\step_{k+1}d } \\
    &\leq 2L^2\step_{k+1}^2\, \nrm{\grad f(x_k)}^2 + 2L^2\step_{k+1}^2 \sigma^2  + 2L^2d\step_{k+1}\\
    &= \bigoh(\step_{k+1}^2\nrm{\grad f(x_k)}^2 + \step_{k+1}).
\end{align*}

\para{Proof for \cref{ex:ORMM}}
Recall that the new algorithm Optimistic Randomized Mid-Point Method has the iterates
\begin{align*}
  \xmid{k} &= \dpt_k - \step_{k+1}\alpha_{k+1}\stochgrad f(x_{k-\frac{1}{2}}) +
    \sqrt{2\step_{k+1}\alpha_{k+1}}\stdgauss_{k+1}', \\
  x_{k+1} &= \dpt_k - \step_{k+1}\stochgrad f(\xmid{k}) +
    \sqrt{2\step_{k+1}}\, \stdgauss_{k+1},
\end{align*}
where $\crl{\alpha_k}$, $\stdgauss_k, \stdgauss_k'$, and $\stochgrad f$ are the same as in
\eqref{eq:randomized-midpoint}, and the noise and bias are
$\noise_{k+1} \coloneqq \stochgrad f(\xmid{k}) - \grad f(\xmid{k})$ and 
$\bias_{k+1} \coloneqq \grad f(\xmid{k}) - \grad f(\dpt_k)$.
We have
\begin{align*}
  \exof{\nrm{\bias_{k+1}}^2 \given \Fil_k} 
  &= \exof{\nrm{\grad f(\xmid{k}) - \grad f(\dpt_k)}^2 \given \Fil_k} \\
  & \leq L^2 \exof{\nrm{\xmid{k} - \dpt_{k}}^2 \given \Fil_k} \\
  & = L^2 \exof{\nrm{-\step_{k+1}\alpha_{k+1}\stochgrad f(x_{k-\frac{1}{2}}) +
    \sqrt{2\step_{k+1}\alpha_{k+1}}\stdgauss_{k+1}'}^2 \given \Fil_k} \\
  & \leq 2L^2\step_{k+1}^2 \exof{\nrm{\grad f(x_{k-\frac{1}{2}})} \given \Fil_k} +
  2L^2\step_{k+1}^2 \sigma^2 + 4L^2d\step_{k+1}.
\end{align*}
Similar to the proof for \cref{ex:PLA}, notice that
$\nrm{\grad f(\dpt_{k- \frac{1}{2}})}^2 \leq 2\nrm{\grad f(\dpt_{k- \frac{1}{2}}) - \grad f(\dpt_{k})}^2 + 2\nrm*{\grad f(\dpt_{k})}^2 $. 
As $\step_k \to 0$, one can assume that $2L^2\step_{k+1}^2 < \frac{1}{2}$, and we get 
\[
  \exof{\nrm{\bias_{k+1}}^2 \given \Fil_k} \leq 4L^2 \step_{k+1}^2 \nrm{\grad f(\dpt_k)}^2 +
  4L^2\step_{k+1}^2 \sigma^2 + 8L^2d\step_{k+1} 
  = \bigoh(\step_{k+1}^2 \nrm{\grad f(\dpt_k)}^2 + \step_{k+1}),
\]
as desired.\hfill\qed

\para{Proof for \cref{ex:SRK-LD}}
The iterates of stochastic Runge-Kutta Langevin algorithm is as follows:
\begin{align*}
  h_1 &= x_k + \sqrt{2\step_{k+1}}\brk*{(1/2 + 1/\sqrt{6})\,\xi_{k+1} + \stdgauss'_{k+1}/\sqrt{12}} \\
  h_2 &= x_k - \step_{k+1}\crl{\grad f(x_k) + U_{k+1}'} + \sqrt{2\step_{k+1}}\brk*{(1/2 - 1/\sqrt{6})\,\xi_{k+1} +
  \stdgauss'_{k+1}/\sqrt{12}} \\
  x_{k+1} &= x_k - \frac{\step_{k+1}}{2}(\grad f(h_1) + \grad f(h_2)) +
  \step_{k+1}U_{k+1} + 
  \sqrt{2\step_{k+1}}\,\xi_{k+1},
\end{align*}
where $\xi_{k+1}$ and $\stdgauss'_{k+1}$ are independent standard Gaussian random variables independent
of $x_k$, and $U_{k+1}$ and $U_{k+1}'$ are noise in the evaluation of $f$.

Observe that 
\begin{align*}
  \bias_{k+1} = \frac{1}{2}(\grad f(h_1) - \grad f(x_k)) + \frac{1}{2}(\grad f(h_2) - \grad f(x_k)).  
\end{align*}
We have
\[
  \exof{\nrm{\grad f(h_1) - \grad f(x_k)}^2 \given \Fil_k} \leq 2L^2d(1/4 + 1/6 + 1/12)\step_{k+1} 
  = \bigoh(\step_{k+1}),
\]
and
\begin{align*}
  \exof{\nrm{\grad f(h_2) - \grad f(x_k)}^2 \given \Fil_k}
  &\leq 2L^2\prn*{ \step_{k+1}^2 \nrm{\grad f(x_k)}^2 + 2\step_{k+1}^2\sigma^2 + 
  2d (1/4 - 1/6 + 1/12 )\step_{k+1} } \\
  &= \bigoh\prn{\step_{k+1}^2 \nrm{\grad f(x_k)}^2 + \step_{k+1}}.
\end{align*}
We thus have 
\begin{align*}
  \exof{\nrm{\bias_{k+1}}^2 \given \Fil_t} &\leq \frac{1}{2}\exof{\nrm{\grad f(h_1) - \grad
  f(x_k)}^2 \given \Fil_k} +
  \frac{1}{2}\exof{\nrm{\grad f(h_2) - \grad f(x_k)}^2 \given \Fil_k} \\
  &= \bigoh\prn{\step_{k+1}^2 \nrm{\grad f(x_k)}^2 + \step_{k+1}},
\end{align*}
as desired.\hfill\qed

\para{Proof for \cref{ex:Mirror-LD}}
Suppose $\phi$ is a Legendre function \citep{rockafellar2015convex} for $\R^d$, and consider the
iterates
\[
  x_{k+1} = x_k - \step_{k+1} \grad f(\grad \phi^*(x_k)) + \sqrt{2\step_{k+1}}\prn{\grad^2
  \phi^*(x_k)^{-1}}^{1/2}\,\stdgauss_{k+1},
\]
where $\phi^*$ is the \emph{Fenchel dual} of $\phi$, that is, $\phi^*(x) = \sup_{y\in \R^d}
(\inner{x, y} - \phi(y))$. Also recall that \citep{rockafellar2015convex}
\[
  \grad \phi(\grad \phi^*(x)) =x, \quad \grad^2 \phi^*(\grad \phi(x))^{-1} = \grad^2\phi(x), \quad
  \forall x \in \R^d.
\]

Let $v = -\grad f \circ \grad \phi^*$ and $\sigma = \prn{\grad^2 \phi^*}^{-1/2}$.  First, we
mention what our assumptions imply on $f$:
\begin{itemize}
  \item The Lipschitzness of $v$ corresponds to a similar condition in \citep[A2]{li2021mirror}:
    \[
      \nrm{\grad f(x) - \grad f(y)} \leq L \nrm{\grad \phi(x) - \grad \phi(y)}
    \]
  \item The Lipschitzness of $\sigma$ in Frobenius norm corresponds to \emph{modified self-concordance}
    in \citep[A1]{li2021mirror}:
    \[
      \nrm{\grad^2 \phi(x)^{1/2} - \grad^2 \phi(y)^{1/2}}_F \leq L\nrm{ \grad \phi(x) - \grad
      \phi(y)}.
    \]
  \item Boundedness of $\sigma$ in Hilbert-Schmidt norm implies
    \[
      \nrm*{\grad^2 \phi(x)^{-1/2}}_F \leq C_\sigma.
    \]
    
  \item Dissipativity and weak-dissipativity of $v$ corresponds to the conditions below, respectively:
    \[
      \inner{\grad \phi(x), \grad f(x)} \geq \alpha \nrm{\grad \phi(x)}^2 - \beta,\quad
      \inner{\grad \phi(x), \grad f(x)} \geq \alpha \nrm{\grad \phi(x)}^{1 + \kappa} - \beta.
    \]
\end{itemize}
If $f$ and $\phi$ satisfy the conditions above, then the mirror Langevin algorithm
\cref{ex:Mirror-LD} fits into the \eqref{eq:standard-template} scheme. 

\begin{remark*}
Note that this version of Mirror Langevin \emph{cannot} handle the case where $e^{-f}$ is supported
on a compact domain; in that case, the Hessian of $\phi$ \emph{has to} blow up near the boundary,
and will not satisfy our boundedness assumption. The version of mirror Langevin we consider in this
paper, though, can be thought as an adaptive conditioning method for densities supported on $\R^d$.
This setting has also been studied in the literature, see \citep{tzen2023}.
\end{remark*}
\para{Proof for \cref{ex:PLA}}
The iterates of \eqref{eq:pla} follow
\begin{equation}\tag{\ref{eq:pla}}
  \dpt_{k+1} = \dpt_k - \step_{k+1} \grad f(\dpt_{k+1}) + \sqrt{2\step_{k+1}}\,\stdgauss_{k+1}.
\end{equation}
We mentioned that the bias term is $\bias_{k+1} = \grad f(\dpt_{k+1}) - \grad f(\dpt_k)$. Now it
remains to prove that it satisfies the conditions \eqref{eq:bias-assumption} and
\eqref{eq:bias-condition}. 
We have 
\begin{align*}
  \exof{\nrm{\bias_{k+1}}^2 \given \Fil_k} 
  &= \exof{\nrm*{\grad f(\dpt_{k+1}) - \grad f(\dpt_k)}^2 \given \Fil_k} \\
  & \leq L^2 \exof{\nrm{\dpt_{k+1} - \dpt_{k}}^2 \given \Fil_k} \\
  & = L^2 \exof{\nrm{-\step_{k+1}\grad f(\dpt_{k+1}) + \sqrt{2\step_{k+1}}\,\stdgauss_{k+1}}^2
  \given \Fil_k} \\
  & \leq 2L^2 \step_{k+1}^2 \exof{\nrm*{\grad f(\dpt_{k+1})}^2 \given \Fil_k} + 4L^2d\step_{k+1}.
\end{align*}
Now, notice that $\nrm*{\grad f(\dpt_{k+1})}^2 \leq 2\nrm*{\grad f(\dpt_{k+1}) - \grad f(\dpt_{k})}^2
+ 2\nrm*{\grad f(\dpt_{k})}^2 $. As $\step_k \to 0$, one can assume that $2L^2\step_{k+1}^2 <
\frac{1}{2}$, and we get
\[
  \exof{\nrm{\bias_{k+1}}^2 \given \Fil_k} \leq \frac{1}{2} \exof{\nrm{\bias_{k+1}}^2 \given \Fil_k} 
  + \nrm{\grad f(\dpt_k)}^2 + 4L^2d\step_{k+1},
\]
which implies
\[
  \exof{\nrm{\bias_{k+1}}^2 \given \Fil_k} \leq 4L^2 \step_{k+1}^2 \nrm{\grad f(\dpt_k)}^2 + 8L^2d\step_{k+1} 
  = \bigoh(\step_{k+1}^2 \nrm{\grad f(\dpt_k)}^2 + \step_{k+1}),
\]
as desired.\hfill\qed

\end{document}